\theoremstyle{definition} %
\newtheorem{definition}{Definition}
\theoremstyle{plain} %
\newtheorem{theorem}{Theorem}
\theoremstyle{remark} %
\newcommand{\twobytwo}[4]{\begin{pmatrix}#1 & #2 \\#3 & #4\end{pmatrix}}
\newcommand{\twobyone}[2]{\begin{pmatrix}#1 \\ #2 \end{pmatrix}}
\newcommand{\threebythree}[9]{\begin{pmatrix}#1 & #2 & #3 \\#4 & #5 & #6\\#7 & #8 & #9 \end{pmatrix}}
\newcommand{\norm}[1]{\left\lVert#1\right\rVert}
\newcommand{\fun}[3]{\ensuremath{#1\colon #2\to #3}}
\newcommand{\bundle}[3]{\ensuremath{#1 \xrightarrow{#2} #3}}
\DeclarePairedDelimiter\inner{\langle}{\rangle}
\DeclarePairedDelimiter\abs{\lvert}{\rvert}
\newcommand{\R}{\mathbb{R}}
\newcommand{\C}{\mathbb{C}}
\newcommand{\Z}{\mathbb{Z}}
\newcommand{\N}{\mathbb{N}}
\newcommand{\Q}{\mathbb{Q}}
\newcommand{\I}{\mathcal{I}}
\newcommand{\GL}{\ensuremath{\mathbf{GL}}}
\newcommand{\SO}{\ensuremath{\mathbf{SO}}}
\newcommand{\SE}{\ensuremath{\mathbf{SE}}}
\newcommand{\SU}{\ensuremath{\mathbf{SU}}}
\newcommand{\SL}{\ensuremath{\mathbf{SL}}}
\newcommand{\tr}{\text{tr}}
\renewcommand{\l}{\ell}
\newcommand{\Hom}{\text{Hom}}
\newacronym{FFT}{FFT}{Fast Fourier Transform}
\newacronym{CNN}{CNN}{convolutional neural network}
\newacronym{G-CNN}{G-CNN}{group equivariant convolutional neural network}
\newcommand{\cnn}{\gls{CNN}}
\newcommand{\cnns}{\glspl{CNN}}
\newcommand{\gcnn}{\gls{G-CNN}}
\newcommand{\gcnns}{\glspl{G-CNN}}
\newcommand{\fft}{\gls{FFT}}
\theoremstyle{theorem}
\newtheorem{proposition}{Proposition}
\newtheorem{lemma}{Lemma}
\theoremstyle{remark}
\newtheorem*{remark}{Remark}
\newtheorem*{example}{Example}
\newtheorem*{examples}{Examples}
\newcommand{\h}{\mathfrak{h}}
\newcommand{\da}{\downarrow\!\!}
\newcommand{\ua}{\uparrow\!\!}
\DeclareMathOperator{\vectorize}{vec}
\title{\Large\normalfont\spacedallcaps{Theoretical aspects of group equivariant neural networks}} %
\author{\spacedlowsmallcaps{Carlos Esteves}} %
\date{\normalsize
  Department of Computer and Information Science \\
  University of Pennsylvania \\
  Philadelphia, PA} %
\begin{document}

\renewcommand{\sectionmark}[1]{\markright{\thesection\enspace\spacedlowsmallcaps{#1}}}
\lehead{\mbox{\llap{\small\thepage\kern1em\color{halfgray} \vline}\color{halfgray}\hspace{0.5em}\rightmark\hfil}} %

\pagestyle{scrheadings} %

\maketitle %

\setcounter{tocdepth}{2} %

\section*{Abstract} %
Group equivariant neural networks have been explored in the past few years and are interesting from theoretical and practical standpoints.
They leverage concepts from group representation theory, non-commutative harmonic analysis and differential geometry that do not often appear in machine learning.
In practice, they have been shown to reduce sample and model complexity,
notably in challenging tasks where input transformations such as arbitrary rotations are present.
We begin this work with an exposition of group representation theory and the machinery necessary to define and evaluate integrals and convolutions on groups.
Then, we show applications to recent \SO(3) and \SE(3) equivariant networks,
namely the Spherical CNNs, Clebsch-Gordan Networks, and 3D Steerable CNNs.
We proceed to discuss two recent theoretical results.
The first, by Kondor and Trivedi (ICML'18), shows that a neural network is group equivariant if and only if it has a convolutional structure.
The second, by Cohen et al. (NeurIPS'19), generalizes the first to a larger class of networks,
with feature maps as fields on homogeneous spaces.
\newpage
\tableofcontents %
\newpage

\section{Introduction}
Recall the familiar \emph{convolution} of functions $f$ and $k$ on the real line
\begin{align*}
  (f * k)(x) = \int\limits_{t\in \R} f(t)k(x-t)\, dt.
\end{align*}
We define the shift operator $(\lambda_yf)(x) = f(x-y)$.
One important property of convolution is the \emph{shift-equivariance}:
$(\lambda_y f) * k = \lambda_y (f * k)$.
Intuitively, if the filter $k$ is designed to respond to some pattern in $f$,
this property tells us that the response will be the same (just shifted) no matter where the
pattern appears.
This is fundamental to the success of \cnns\ introduced by \textcite{fukushima1980neocognitron}.
The filter $k$ is \emph{learned} and compactly supported,
and convolution allows weight-sharing, in contrast with fully connected networks.
The combination of \cnns\ and the backpropagation algorithm (\textcite{lecun1989backpropagation}) enabled the recent deep learning revolution.
The vast majority of \cnns\ employ convolution on Euclidean spaces.
For example, $\R$ for audio, $\R^2$ for images and $\R^3$ for volumetric occupancy grids.

The main objective of this work is to present a theory of \cnns\ on more general spaces;
namely, groups and homogeneous spaces.
We still wish the equivariance property to hold,
but we will now call it \emph{group-equivariance}
and define it with respect to the action of a group.
Let $\lambda_g,\, \lambda'_g$ indicate the group action for some $g \in G$.
We say that a linear map \fun{\Phi}{X}{Y} is equivariant to actions of $G$ when
\begin{align}
  \Phi(\lambda_g f) = \lambda'_g (\Phi (f)),
  \label{eq:equivariance}
\end{align}
which is equivalently represented by the commutative diagram
\[
  \begin{tikzcd}
    X \arrow{r}{\Phi} \arrow[swap]{d}{\lambda_g} & Y \arrow{d}{\lambda'_g} \\
    X \arrow{r}{\Phi} & Y.
  \end{tikzcd}
\]
We are interested in designing and parameterizing $\Phi$ such that it is equivariant and its parameters can be optimized.
Note that $\lambda$ and $\lambda'$ are not necessarily the same since
$\Phi$ may map between different spaces.
When $\lambda'$ is the identity, we say that $\Phi$ is \emph{invariant} to $G$.
Some authors reserve the term \emph{equivariant} for when $\lambda=\lambda'$ and call it
\emph{covariant} otherwise, but we will not make this distinction.
\cnns\ satisfying this property are called \gcnns,
introduced by \textcite{cohen2016group}.

A typical example are \gcnns\ for inputs on the surface of the sphere $S^2$,
where the group of rotations $\SO(3)$%
\footnote{$\SO(3)$ is the group of special orthogonal $3\times 3$ matrices,
 which is identified with 3D rotations.}
act.
One application is the semantic segmentation of a panoramic images~\cite{sphhg,CohenWKW19}
where the equivariance property enforces that outputs strictly follow
input camera rotations.

Our focus is on the theoretical background that enables \gcnns,
which is not usually covered in recent papers due to space constraints.
To this end, we will discuss group representation theory (\cref{sec:group}),
integration and harmonic analysis on non-Euclidean spaces (\cref{sec:haar,sec:harm}).
\Cref{sec:appl} shows how this theory is applied to \gcnns.
\Cref{sec:kt,sec:cohen} cover recent results by \textcite{kondor18general,cohen2019general} generalizing the \gcnn\ theory and showing that equivariance implies a group convolutional structure.
Before discussing \textcite{cohen2019general},
we need to introduce some concepts from differential geometry related to fiber bundles,
which is done in \cref{sec:fiber-bundles}.

We will not cover recent results of \gcnns\
operating on 2D images, where equivariance to
translation, planar rotation, mirroring and sometimes scale is sought.
We recommend \textcite{weiler2019general} for a general theory
of this class of networks.

Most of the material in \cref{sec:group,sec:haar,sec:harm}
is presented in a more rigorous and complete way in \textcite{gallierncharm}.
We omit deep proofs related to the Haar measure and the Peter-Weyl theorem,
and often tailor the material to just the parts required to understand the current
\gcnns.
We will, nevertheless, derive the irreducible representations of
$\SL(2, \C)$, $\SU(2)$ and $\SO(3)$ and show how special functions,
including the spherical harmonics, arise in the process.
Furthermore, we define and prove the formulas for  $\SO(3)$ and spherical convolutions
and cross-correlations that are used in recent works.
While \textcite{gallierncharm} is the main reference utilized,
we sometimes follow \textcite{nakahara2003geometry,serre1977linear,dieudonn1980special,folland2016course,hall2015lie,vilenkin1978special,rudin2006real} when more appropriate.

No new theoretical or practical results are introduced in this work.
We present an introduction to the theoretical background
and recent developments under a consistent notation,
and sometimes from a different point of view,
which we hope will be useful for some readers.

\section{Group representation theory }
\label{sec:group}
Group representation theory is the study of groups by the way they act on vector spaces,
which is done by \emph{representing} elements of the group as linear maps between vector spaces.
\subsection{Groups and homogeneous spaces}
We begin with basic definitions about groups.
\begin{definition}[group]
A \emph{group} $(G, \cdot)$ is a set $G$ equipped with an associative binary operation $\fun{\cdot}{G \times G}{G}$,
an identity element, and where every element has an inverse also in the set.
When $\cdot$ is commutative, we call the group \emph{abelian} or \emph{commutative}.
When the set is equipped with a topology where $\cdot$ and the inverse map are continuous,
we call it a \emph{topological group}.
When such topology is compact, we call the group a \emph{compact group}.
When $G$ is a \emph{smooth manifold} and $\cdot$ and the inverse map are smooth, it is a \emph{Lie group}.
A \emph{subgroup} $(H, \cdot)$ of a group $(G, \cdot)$ is a group such that $H \subseteq G$.
\end{definition}
\begin{examples}
  \begin{itemize}
  \item[]
  \item The integers under addition $(\Z, +)$ form an abelian,
    non-compact group.
  \item The group of all permutations of a set of $n$ symbols,
called the symmetric group $S_n$ is a finite, non-commutative group of $n!$ elements.
  \item The group of rotations in $3$D, \SO(3), is a compact, non-commutative Lie group.
  \end{itemize}
\end{examples}
\noindent For a negative example,
consider the sphere $S^2$ and its north pole $\nu=(0, 0, 1)$.
We can identify any point on the sphere by angles $(\alpha, \beta)$,
which represent a rotation of the north pole $R_y(\beta)$ (around $y$)
followed by $R_z(\alpha)$ (around $z$); we write
${x = R_z(\alpha)R_y(\beta)\nu}$.
Now define the operation $\cdot$ such that
$x_1 \cdot x_2 =  R_z(\alpha_1)R_y(\beta_1)R_z(\alpha_2)R_y(\beta_2)\nu$.
Any rotation in $\SO(3)$ can be represented as $R_z(\alpha_1)R_y(\beta_1)R_z(\alpha_2)R_y(\beta_2)$,
and not only the ones of the form $R_z(\alpha_1)R_y(\beta_1)$;
therefore the operation $\cdot$ as defined is not closed in $S^2$,
and $(S^2, \cdot)$ is not a group.

While $S^2$ is not a group, we will show that it is a homogeneous space
of $\SO(3)$.
Intuitively, homogeneous spaces are spaces where the group acts ``nicely''.
For this reason, they are useful as the feature domain in \gcnns.
Homogeneous spaces are closely related to coset spaces;
we now define both structures and show how they relate.
\begin{definition}[homogeneous space]
  The action of a group $G$ is \emph{transitive} on a space $X$
  if for any pair of elements $x$ and $y$ in $X$,
  there exists an element $g$ in $G$ such that $y = gx$.
  A \emph{homogeneous space} $X$ of a group $G$ is a space where the group acts transitively.
\end{definition}

\begin{definition}[coset space]
  Given a subgroup $H$ and an element $g$ of a group $G$,
  we define the \emph{left coset} $gH$ as ${gH = \{gh \mid h \in H\}}$.
  The set of left cosets partition $G$ and is called the \emph{left coset space} $G/H$.
  We define the right cosets $Hg$ and their coset space $H\backslash G$ analogously.
\end{definition}
\noindent Let $o \in X$ be an arbitrarily chosen origin of $X$ and $H_o$ its stabilizer.
Then, there is a bijection%
\footnote{The bijection will be a homeomorphism is all cases considered in this work,
  but not in general.}
between $X$ and $G/H_o$.

We will often refer to elements of a homogeneous space $X \cong G/H_o$ by the coset $gH_o$,
and the map $g \mapsto gH_o$ is a projection from the group $G$ to the homogeneous space $X$.
Since we are interested in maps that are equivariant to actions of some group $G$,
we will frequently consider maps between homogeneous spaces of $G$.
\begin{example}
  Let us return to the sphere $S^2$ and its north pole $\nu=(0, 0, 1)$.
  The sphere is a homogeneous space since $\SO(3)$ acts transitively on it.
  The set of rotations that do not move $\nu$ is the stabilizer
  $H_\nu = \{R_z(\delta) \mid \delta \in [0, 2\pi) \}$.
  Any rotation in $R\in \SO(3)$ can be written as $R = R_z(\alpha)R_y(\beta)R_z(\gamma)$,
  and generate left cosets of the form
  ${RH_\nu = \{R_z(\alpha)R_y(\beta)R_z(\gamma+\delta) \mid \delta \in [0, 2\pi) \}}$.
  The pair $(\alpha,\, \beta)$ uniquely identify each coset, which gives an isomorphism
  between points on the sphere and the set of all cosets $\SO(3)/H_\nu$.
  Since $H_\nu$ is isomorphic to group of planar rotations $\SO(2)$,
  we write $S^2 \cong \SO(3) / \SO(2)$.
\end{example}

\subsection{Group representations}
\label{sec:representations}
Group representations have numerous applications.
Most important to our purposes are
(i) they represent actions on vector spaces
(for example, $\lambda_g$ in \cref{eq:equivariance} could be a linear representation),
and (ii) they form bases for spaces of functions on groups, as will be detailed in \cref{sec:harm}.

\begin{definition}[representation]
  A \emph{group homomorphism} between groups $G$ and $H$ is a map \fun{f}{G}{H} such that $f(g_1g_2) = f(g_1)f(g_2)$.
  Let $G$ be a group and $V$ a vector space over some field.
  A \emph{linear representation} is a group homomorphism \fun{\rho}{G}{\GL(V)},
  where $\GL(V)$ is the general linear group.%
  \footnote{When $V$ is finite-dimensional and $n = \dim V$, $\GL(V)$ is identifiable with the group of $n \times n$ invertible matrices.}
  If $V$ is an inner product space and $\rho$ is continuous and preserves the inner product,
  it is called a \emph{unitary representation}.
  The \emph{character} of a representation $\rho$ is the map \fun{\chi_\rho}{G}{\C}
  such that $\chi_\rho(g) = \tr(\rho(g))$.
\end{definition}
\begin{example}
  Consider the multiplicative group $G$ of complex numbers of the form $g_\theta = e^{i\theta}$.
  The map
  \[\rho(e^{i\theta}) = \twobytwo{\cos \theta}{ \sin\theta}{-\sin\theta}{\cos\theta}\]
  is a representation of $G$ on $\R^2$.
  We can check that $g_\theta g_\phi = g_{\theta+\phi}$ and
  $\rho(e^{i(\theta + \phi)}) = \rho(e^{i\theta})\rho(e^{i\phi})$.
\end{example}
\begin{example}
  Let $L^2(G)$ be the Hilbert space of square integrable functions on $G$, and
  let \fun{\rho}{G}{\GL(L^2(G))} act on \fun{f}{G}{\C} as
  $(\rho(g)(f))(x) = f(g^{-1}x)$.
  $\rho$ defined this way is a representation of $G$; specifically,
  it is a \emph{left regular representation} of $G$.
\end{example}
\begin{definition}[irreducible representation]
  Let \fun{\rho}{G}{\GL(V)} be a representation of $G$ on a vector space $V$,
  and $W$ be a vector subspace of $V$.
  When $W$ is invariant under the action of $G$, i.e.,
  for all $g \in G$ and $w\in W$ we have $\rho(g)(w) \in W$,
  the restriction of $\rho$ to $W$ is a representation of $G$ on $W$, called a \emph{subrepresentation}.
  When the only subrepresentations of $\rho$ are $V$ and the zero vector space,
  we call $\rho$ an \emph{irreducible representation} or \emph{irrep}.
\end{definition}
\begin{example}
  Consider the group $\SO(3)$ and the vector space $V$ of $3\times 3$ real matrices ($V \cong \R^9$).
  We define a representation \fun{\rho}{\SO(3)}{\GL(V)} such that ${\rho(g)(A) = g^\top A g}$.
  Now consider the subspace $W$ of $V$ comprised of antisymmetric matrices ($B=-B^\top$).
  It turns out $W$ is invariant to $\rho$,
  \begin{align}
    (\rho(g)(B))^\top = (g^\top B g)^\top = -g^\top B g = -\rho(g)(B)
  \end{align}
  so $\rho(g)(B) \in W$ for all $g \in \SO(3)$ and $B \in W$.
  Therefore $\rho$ is a reducible representation.
  It is, however, irreducible as a representation on $W$.
\end{example}
\begin{remark}
Every representation of a finite group is a direct sum of irreps (Maschke's theorem).
\end{remark}
\begin{remark}
Every finite-dimensional unitary representation of a compact group is a direct sum of unitary irreducible representations (unirreps).
\end{remark}

We often want to determine all irreducible representations of a group,
or decompose a representation in its irreducible parts.
The characters ${\fun{\chi_\rho}{G}{\C}}$ play an important role in this task.
First, we define the inner product of characters $\inner{\chi_i, \chi_j} = \int_G \chi_i(g)\overline{\chi_j(g)}\, dg$.%
\footnote{This involves integration on the group, which we will define in \cref{sec:haar}.}
The following properties hold:
\begin{itemize}
\item Isomorphic representations have the same character.
  The converse is only true for semisimple representations,
  which include unitary representations and all representations of finite or compact groups.
\item Distinct characters of irreducible representations of compact groups are orthogonal,
  $\inner{\chi_i, \chi_j} = 0$ when $i \neq j$.
\item A representation of a compact group is irreducible if and only if its character $\chi$ satisfies $\inner{\chi, \chi} = 1$.
\item The character of a direct sum of representations is the sum of the individual characters.
\end{itemize}

Now let \fun{\rho_1}{G}{\GL(V_1)} and \fun{\rho_2}{G}{\GL(V_2)} be finite-dimensional representations.
The map \fun{\rho_{12}}{G}{V_1 \otimes V_2}
obtained via tensor product $\rho_{12}(g) = \rho_1(g) \otimes \rho_2(g)$
is a representation of $V_1 \otimes V_2$.
This representation is not irreducible in general,
and the Clebsch-Gordan theory studies how it decomposes into irreps.
\begin{definition}[G-map]
  Given two representations \fun{\rho_1}{G}{\GL(V_1)} and \fun{\rho_2}{G}{\GL(V_2)},
  a \emph{G-map} is a linear map \fun{f}{V_1}{V_2} such that $${f(\rho_1(g)(v_1)) = \rho_2(g)(f(v_1))}$$
  for every $g \in G$ and $v_1\in V_1$.
  If $f$ is invertible, we say that $\rho_1$ and $\rho_2$ are \emph{equivalent},
  and we can define equivalence classes of representations.
  A G-map is sometimes called a \emph{G-linear}, \emph{G-equivariant}, or \emph{intertwining} map.
\end{definition}
\begin{remark}
  In the context of neural networks, we usually have alternating linear maps and nonlinearities.
  In equivariant neural networks, we want the linear maps to be G-maps.
  The representations will often be the natural action $(\rho(g)f)(x) = f(g^{-1}x)$.
\end{remark}

The following is an important result characterizing G-maps between irreps.
\begin{theorem}[Schur's Lemma]
\label{thm:schur}
Let \fun{\rho_1}{G}{\GL(V_1)} and \fun{\rho_2}{G}{\GL(V_2)} be irreducible representations of $G$,
and \fun{f}{V_1}{V_2} a \emph{G-map} between them.
Then $f$ is either zero or an isomorphism.
If $V_1=V_2$ and $\rho_1=\rho_2$ are complex representations, then $f$ is a multiple of the identity map, $f = \lambda \text{id}$.
\end{theorem}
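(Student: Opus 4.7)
The plan is to prove the first statement by examining the kernel and image of the $G$-map $f$, then handle the second statement by an eigenvalue argument that reduces it to the first.

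First I would show that $\ker f \subseteq V_1$ is invariant under $\rho_1$. If $v \in \ker f$, then for any $g \in G$ the $G$-map property gives $f(\rho_1(g)(v)) = \rho_2(g)(f(v)) = \rho_2(g)(0) = 0$, so $\rho_1(g)(v) \in \ker f$. Hence the restriction of $\rho_1$ to $\ker f$ is a subrepresentation of the irrep $\rho_1$, which forces $\ker f = \{0\}$ or $\ker f = V_1$. Similarly, $\operatorname{im} f \subseteq V_2$ is invariant under $\rho_2$: any element of $\operatorname{im} f$ has the form $f(v)$, and $\rho_2(g)(f(v)) = f(\rho_1(g)(v)) \in \operatorname{im} f$. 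By irreducibility of $\rho_2$, either $\operatorname{im} f = \{0\}$ or $\operatorname{im} f = V_2$. Combining the two dichotomies, either $f$ is the zero map, or $f$ is injective and surjective, i.e., an isomorphism.

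For the second statement, assume $V_1 = V_2 = V$, $\rho_1 = \rho_2 = \rho$, and that we are working over $\mathbb{C}$. The key observation is that since $V$ is a complex vector space, the endomorphism $f$ has at least one eigenvalue $\lambda \in \mathbb{C}$. Consider the map $f - \lambda \,\mathrm{id} : V \to V$. I would verify it is again a $G$-map: for any $g \in G$ and $v \in V$,
\begin{align*}
(f - \lambda\,\mathrm{id})(\rho(g)(v)) &= f(\rho(g)(v)) - \lambda \rho(g)(v) \\
&= \rho(g)(f(v)) - \rho(g)(\lambda v) \\
&= \rho(g)\big((f - \lambda\,\mathrm{id})(v)\big),
\end{align*}
using that $\rho(g)$ is linear and commutes with scalar multiplication. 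By construction, $f - \lambda\,\mathrm{id}$ has a nontrivial kernel (the $\lambda$-eigenspace is nonzero), so by the first part of the lemma it cannot be an isomorphism, and must therefore be the zero map. Thus $f = \lambda\,\mathrm{id}$.

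The main obstacle is not really an obstacle but a subtlety: the second part genuinely requires that the field be algebraically closed (so that eigenvalues exist), which is why the hypothesis specifies complex representations. Over $\mathbb{R}$ the conclusion $f = \lambda\,\mathrm{id}$ can fail, as shown by rotations acting on $\mathbb{R}^2$, so invoking the existence of a complex eigenvalue is the pivotal step. The rest of the argument is a clean dichotomy from irreducibility, with the only care needed being to check that $\ker f$ and $\operatorname{im} f$ really are $G$-invariant subspaces of the correct representations.
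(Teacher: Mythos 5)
Your proof is correct, and it is the standard argument for Schur's Lemma: the paper itself states \cref{thm:schur} without proof, deferring to its references (e.g.\ Serre, Gallier--Quaintance), where essentially this same kernel/image dichotomy plus eigenvalue argument appears. Both halves of your argument are sound, and you correctly identify algebraic closedness as the reason the second part is restricted to complex representations. The one hypothesis you are silently using in the second part is that $V$ is finite-dimensional and nonzero: the existence of an eigenvalue of $f$ over $\C$ comes from the characteristic polynomial having a root, which is a finite-dimensional statement (an endomorphism of an infinite-dimensional complex space need not have any eigenvalue, so the infinite-dimensional version of Schur's lemma for unitary representations requires a spectral-theoretic argument instead). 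Since the paper works with finite-dimensional unirreps of compact groups throughout, this is consistent with the intended setting, but it is worth stating the assumption explicitly.
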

Henceforth, we assume representations are complex (representation vector space is over a complex field)
except when stated otherwise.

It is possible to construct a representation of a group $G$
from a representation $\rho$ of a subgroup $H$ on a vector space $V$.
The basic idea is to consider $V$-valued functions on $G$
and let $\rho$ act on their values.
This is key to define representations on associated vector bundles,
which will be discussed in \cref{sec:fiber-bundles,sec:cohen}.
\begin{definition}[induced representation]
  \label{def:induced}
  Let $G$ be a locally compact group with a subgroup $H$,
  and \fun{\rho}{H}{\GL(V)} a representation of $H$.
  Consider the space $W$ of vector valued functions \fun{f}{G}{V}
  such that $f(gh) = \rho(h^{-1})(f(g))$,
  and the operator $\pi$ on $W$ such that $(\pi(u)(f))(g) = f(u^{-1}g)$.
  It is easy to see that $\pi$ is a linear map from $W$ to itself and
  a group homomorphism; hence, a representation of $G$ on $W$.
  We call $\pi$ the representation of $G$ \emph{induced} by $\rho$,
  sometimes denoted $\text{Ind}_H^G\rho$.
\end{definition}
\begin{remark}
We can construct a suitable function $f$
from any continuous {\fun{k}{G}{V}} with compact support
by making $f(g) = \int_H\rho(h)k(gh)\, dh$.
\end{remark}
\begin{example}
  Consider $G$, $H$ and $f$ in \cref{def:induced} and let $\rho$ be
  the trivial representation $\rho(h) = I$ for all $h$. Then, $f(gh)=f(g)$, which implies that
  $f$ is constant on cosets, and thus can be viewed as a function on G/H.
  The induced representation $\pi$ is then just the natural representation
  of $G$ on $L^2(G/H)$, given by $(\pi(u) f)(gH) = f(u^{-1}gH)$.
\end{example}

\begin{example}
  There is an alternative but equivalent geometric construction of the induced representation
  using vector bundles (see \cref{def:assoc}).
  We discuss a simple example; refer to \textcite{folland2016course} for the full details.
  Consider the group of rigid motions in the plane $\SE(2)$, its subgroup $\SO(2)$,
  and a representation of \SO(2) on $\R^2$, \fun{\rho}{\SO(2)}{\GL(\R^2)}.
  We refer to elements of \SE(2) as $(t,r)$ for a translation $t$ and rotation $r$.
  Now let $V$ be the space of functions \fun{f}{\R^2}{\R^2} (vector fields on the plane).
  The representation $\rho$ of $\SO(2)$ acts on the range of $f$,
  and induces a representation \fun{\pi}{\SE(2)}{\GL(V)}
  \[\pi((t, r))(f)(x) = \rho(r) f(r^{-1}(x-t)). \]
  Geometrically, this is an action of $\SE(2)$ on the vector field that
  first change the coordinates, then rotates the vectors.
\end{example}

This concludes our introduction to group representation theory.
For more details we recommend \textcite{gallierncharm,serre1977linear,hall2015lie}.
\section{Integration}
\label{sec:haar}
In order to compute Fourier transforms and convolutions on groups,
we need to integrate functions on groups.
The key ingredient is the Haar measure.
We begin with the familiar Riemann integral,
discuss its limitations and introduce Lebesgue integration as the remedy.
The Lebesgue integral allows integration over arbitrary sets
given an appropriate measure.
Finally, we define the Haar measure,
which is the appropriate measure used for integration on
locally compact groups.
\subsection{The Riemann integral}
Intuitively, the Riemann integral is the familiar ``area under the curve''
of a continuous function on an interval of the real line \fun{f}{[a,b]}{\R}.
The idea is to partition the integration interval and
define the integral as the sum of areas of the rectangles defined by one value
of $f$ on each subinterval and the subinterval width,
on the limit where such widths tend to zero.

\begin{definition}[Riemann integral]
  For an interval $[a,b] \subset \R$ and a subdivision $T=\{t_i\}$ with $0 \le i \le n$,
  $t_0=a$, $t_{n}=b$, and $t_k < t_{k+1}$ for all $k < n$,
  the \emph{Cauchy-Riemann sum} $s_T(f)$ of a continuous function \fun{f}{[a,b]}{\R} is
  \begin{align*}
    s_T(f) = \sum_{k=0}^{n-1}(t_{k+1}-t_{k})f(t_k).
  \end{align*}
  The \emph{diameter} of the subdivision $T$ is $\delta(T) = \max_k\{t_{k+1}-t_k\}$.
  Now consider any sequence of subdivisions $T_j$ such that $\lim_{j \to \infty}\delta(T_j) = 0$
  (consequently, $n \to \infty$).
  We define the \emph{Riemann integral} as $\int_a^b f(t)\,dt = \lim_{j \to \infty} s_{T_j}(f)$.
\end{definition}
It can be shown that $s_{T_j}$ always converge to the same limit
for any sequence of subdivisions $T_j$.
Importantly, the map $f \mapsto \int_a^bf(t) \, dt$ is linear.
The Riemann integral can be extended to functions on products of closed intervals on $R^n$
and to vector valued functions.
However, it cannot be defined on more general domains;
the Lebesgue integration was introduced to overcome this limitation.
\subsection{Lebesgue integration}
Lebesgue integration can be defined on arbitrary sets,
and allows taking limits of sequences of functions under integration,
which is necessary in Fourier analysis, for example.

In this section, we follow \textcite{rudin2006real} for the most part.
Refer to \textcite{gallierncharm} for a more general approach
which allow functions taking value on arbitrary (possibly infinite-dimensional)
vector spaces.

We begin by defining the crucial concept of \emph{measure}.
\begin{definition}[measure]
  A collection $\Sigma$ of subsets of a set $X$ is a \emph{$\sigma$-algebra}
  if it contains $X$ and is closed under complementation and countable unions.
  We call the tuple $(X, \Sigma)$ a \emph{measurable space},
  and the subsets in $\Sigma$ are \emph{measurable sets}.
  A function \fun{f}{X}{Y} is \emph{measurable} if
  the preimage of every measurable set in $Y$ is in $\Sigma$.
  A \emph{measure}  is a function \fun{\mu}{\Sigma}{[0, \infty]}
  which is \emph{countably additive},
  \begin{align}
    \mu\left( \bigcup_{i=0}^\infty A_i \right) = \sum_{i=0}^{\infty} \mu(A_i) \label{eq:additivity}
  \end{align}
  for a disjoint collection of $A_i \in \Sigma$.
  The tuple $(X, \Sigma, \mu)$ is called a \emph{measure space}.
\end{definition}
\begin{example}
  On the real line $\R$, we define $\mathcal{B}(\R)$ as the smallest $\sigma$-algebra containing every open interval.
  This is known as the $\sigma$-algebra of Borel sets, or the Borel algebra.
  Then \fun{\mu}{\mathcal{B}(\R)}{[0, \infty]} defined such that
  $\mu((a,b]) = b-a$ is a measure in $(\R, \mathcal{B}(\R))$;
  it is usually called the Borel measure.
\end{example}

Carathéodory's theorem allows the construction of measures and measure spaces from an outer measure.
\begin{theorem}[Carathéodory]
  \label{thm:caratheodoty}
  An \emph{outer measure} $\mu^*$ on a set $X$ is a function \fun{\mu^*}{2^X}{[0, \infty]}
  such that (i) $\mu^*(\emptyset)=0$, (ii) if $A \subseteq B$, $\mu^*(A) \le \mu^*(B)$ and (iii)
  \begin{align}
    \mu^*\left( \bigcup_{i=0}^\infty A_i \right) \le \sum_{i=0}^{\infty} \mu^*(A_i).
    \label{eq:subadditivity}
  \end{align}
  Note that \cref{eq:subadditivity} is a relaxation of \cref{eq:additivity}, called \emph{subadditivity.}
  We can construct an outer measure on $X$ as
  \begin{align}
    \mu^*(A) = \inf \left\{\sum_{n=0}^\infty \lambda(I_n) \mid A \subseteq \bigcup_{n=0}^\infty I_n \right\}
    \label{eq:outer}
  \end{align}
  where $\lambda$ is any positive function with $\lambda(\emptyset)=0$
  and there is a family $\{I_n\}$ of subsets of $X$ that contains the empty set and
  covers any subset $A \subseteq X$.
  Now consider the family of subsets
  \begin{align*}
    \Sigma = \{A \in 2^X \mid \mu^*(A) = \mu^*(E \cap A) + \mu^*(E \cap (X-A)),\, \forall E \in 2^X\}.
  \end{align*}
  Then $\Sigma$ is a $\sigma$-algebra and the restriction $\mu$ of $\mu^*$ to $\Sigma$ is a measure,
  so $(X, \Sigma, \mu)$ is a measure space.
\end{theorem}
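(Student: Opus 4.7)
The plan is to split the statement into three assertions and dispatch them in order: (a) the formula in \cref{eq:outer} actually defines an outer measure; (b) the family $\Sigma$ is a $\sigma$-algebra; (c) the restriction $\mu = \mu^*|_\Sigma$ is countably additive on disjoint elements of $\Sigma$.

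For (a), I would check the three defining properties of an outer measure directly from \cref{eq:outer}. The vanishing $\mu^*(\emptyset)=0$ holds because $\emptyset$ can be covered by countably many copies of the empty set from $\{I_n\}$, which carries $\lambda$-weight zero. Monotonicity is immediate: any $\{I_n\}$-cover of $B$ is also a cover of any $A\subseteq B$, so the infimum can only decrease. Countable subadditivity is the standard $\epsilon/2^n$ trick: given $\epsilon>0$, for each $A_n$ pick a cover $\{I^{(n)}_k\}_k$ with $\sum_k \lambda(I^{(n)}_k)\le \mu^*(A_n)+\epsilon/2^{n+1}$; the doubly-indexed family then covers $\bigcup_n A_n$ with total $\lambda$-weight at most $\sum_n \mu^*(A_n)+\epsilon$, and sending $\epsilon\to 0$ gives the inequality.

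For (b), closure under complements is immediate since the Carathéodory condition is symmetric in $A$ and $X-A$, and $\emptyset\in\Sigma$ follows by direct substitution. The substantive step is closure under unions. I would first show that if $A,B\in\Sigma$ then $A\cup B\in\Sigma$ by applying the defining identity for $A$ to a general test set $E$, then for $B$ to $E\cap A$ and to $E\cap(X-A)$, and reassembling the four pieces using subadditivity of $\mu^*$ to recover the split of $E$ across $A\cup B$ and its complement. Extending to countable unions uses a disjointification: replace $\{A_n\}$ by $B_n = A_n\setminus\bigcup_{k<n}A_k$, which by the finite case and complement closure lies in $\Sigma$; apply the Carathéodory identity for the partial union $C_N=\bigcup_{n\le N}B_n$ to an arbitrary test set $E$, and pass to the limit $N\to\infty$, using monotonicity and countable subadditivity of $\mu^*$ to control the residual.

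For (c), the same countable-union argument produces finite additivity of $\mu^*$ on disjoint elements of $\Sigma$ as a byproduct (take $E=\bigcup_n B_n$); countable additivity then follows by combining outer subadditivity with the lower bound $\mu^*(\bigcup_n B_n) \ge \sum_{n\le N} \mu^*(B_n)$ coming from monotonicity plus finite additivity, and letting $N\to\infty$. The main obstacle I anticipate is the bookkeeping in the countable-union step of (b): one must carefully thread the test set $E$ through the partial-union identities and justify commuting the limit in $N$ with the infima defining $\mu^*$, using only subadditivity and monotonicity, since additivity of $\mu^*$ is not yet available on all of $2^X$.
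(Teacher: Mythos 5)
The paper states this theorem without proof (the surrounding text explicitly defers the measure-theoretic foundations to \textcite{rudin2006real} and \textcite{gallierncharm}), so there is no in-paper argument to compare against. Your outline is the standard Carath\'eodory construction as found in those references, and it is correct: part (a) correctly verifies the outer-measure axioms for \cref{eq:outer} (the $\epsilon/2^n$ covering trick for subadditivity, monotonicity from restriction of covers, and the hypothesis that $\{I_n\}$ contains $\emptyset$ and covers every subset guarantees both $\mu^*(\emptyset)=0$ and that the infimum is over a nonempty set); part (b) is the classical four-way splitting of the test set $E$ for finite unions followed by disjointification and the partial-union limit for countable unions; part (c) extracts countable additivity from the same inequality chain. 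The one point worth tightening when you write it out is the countable-union step: the intermediate lemma you need is $\mu^*\left(E \cap \bigcup_{n\le N} B_n\right) = \sum_{n\le N}\mu^*(E\cap B_n)$ for \emph{every} test set $E$ (proved by induction on $N$ using the Carath\'eodory identity for $B_N$), after which no interchange of limits with infima is required --- one only uses monotonicity to bound $\mu^*(E\cap C_N^c)$ below by $\mu^*(E\cap C^c)$ and countable subadditivity to close the chain of inequalities, exactly as you anticipate.
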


\begin{example}
  Let $\mu^*$ be an outer measure constructed as in \cref{eq:outer}
  where $\{I_n\}$ is the set of all open intervals in $\R$
  and $\lambda((a,b)) = b-a$.
  By applying \cref{thm:caratheodoty} to $\mu^*$ we obtain
  the Lebesgue measure $\mu_L$,
  and the $\sigma$-algebra of Lebesgue measurable sets $\mathcal{L}(\R)$.
  It can be shown that $\mathcal{B}(\R) \subset \mathcal{L}(\R)$;
  this extends to $\R^n$.
\end{example}

Equipped with the notion of measures and measurable functions,
we can define the Lebesgue integral.
\begin{definition}[Lebesgue integral]
  Let $(X, \Sigma, \mu)$ be a measure space.
  We define the \emph{characteristic function} $\chi_A$ of a measurable set $A$ as the
  indicator function \fun{\chi_A}{X}{\{0,1\}} that is 1 when $x \in A$ and 0 otherwise.
  A \emph{simple function} is a function $s$ on $X$ whose range consist only of finitely
  many distinct values;
  formally, $s(x) = \sum_{i=0}^n \alpha_i\chi_{A_i}(x)$ where $\{\alpha_i\}$
  is the set of distinct values.
  We define the integral of a measurable simple function over a set $E \in \Sigma$ as
  \begin{align*}
    \int\limits_E s\,d\mu = \sum_{i=0}^n\alpha_i \mu(A_i \cap E).
  \end{align*}
  We call a function $f$ \emph{positive} when $f(x) \ge 0$ for all $x$,
  and say that $f \le k$ when $k-f$ is positive.
  For a measurable positive function \fun{f}{X}{[0, \infty]} we define the \emph{Lebesgue integral} as
  \begin{align*}
    \int\limits_E f\,d\mu = \sup \int\limits_E s\,d\mu,
  \end{align*}
  where the supremum is over all simple functions $s$ such that $0 \le s \le f$.
\end{definition}
The Lebesgue integral is easily extended to complex valued functions \fun{f}{X}{\C}
by noting that we can write $f = u^{+} - u^{-} + i(v^{+} - v^{-})$
for positive functions $u^{+}$, $u^{-}$, $v^{+}$, $v^{-}$;
the integral is then obtained by linearity.

Intuitively, while the Riemann integral partitions the domain of $f$ to compute
the integral, the Lebesgue integral partitions its range.
This is the key to enable integration over more general domains.
\begin{example}
  Consider again the measure space $(\R, \mathcal{B}(\R), \mu)$,
  and the indicator function for the rational numbers \fun{f}{\R}{\{0, 1\}},
  $f(x)=1$ if $x \in \Q$ and $f(x)=0$ otherwise.
  The function is not Riemann-integrable since here is no interval where it is continuous.
  However it is a simple function that takes the value 1 on a set of measure zero
  (since $\Q$ is countable), and 0 elsewhere.
  Hence, $f$ is Lebesgue integrable and its integral is zero on any interval.
\end{example}

\subsection{The Haar measure}
The Lebesgue integral allows integration on arbitrary sets,
when they are given the structure of a measure space.
The Haar measure gives such structure to locally compact groups.

\begin{theorem}[Haar measure]
  Consider a locally compact Hausdorff topological group $G$,
  and the Borel $\sigma$-algebra $\mathcal{B}(G)$ generated by its open subsets.
  There exists a unique measure $\mu$, up to a multiplicative constant, such that
  $\mu$ is left-invariant, i.e., $\mu(gE) = \mu(E)$ for all $E \in \mathcal{B}(G)$ and $g \in G$.
  Furthermore, $\mu$ is $\sigma$-regular,
  \begin{align*}
    \mu(E) &= \inf \{\mu(U) \mid E \subseteq U,\, U \text{ open}  \}, \\
    \mu(E) &= \inf \{\mu(K) \mid K \subseteq E,\, K \text{ compact} \}.
  \end{align*}
  The measure $\mu$ defined as such is called the \emph{left Haar measure}.
  We define the \emph{right Haar measure} analogously;
  both measures are not necessarily equal.
\end{theorem}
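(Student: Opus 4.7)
The plan is to construct the left Haar measure via Haar's original covering-number approach, extend the resulting content to a Borel measure via Carath\'eodory's theorem (\cref{thm:caratheodoty}), and then prove uniqueness by a symmetric Fubini argument. An alternative, more modern route is to build a positive left-invariant linear functional on $C_c(G)$ and invoke the Riesz representation theorem; the underlying analytic content is the same.

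For existence, I would fix a compact set $K_0 \subset G$ with nonempty interior and, for each open neighborhood $U$ of the identity with compact closure, define the covering index $(K : U)$ as the minimum number of left translates $g_i U$ needed to cover a compact $K$; local compactness guarantees finiteness. I normalize by setting $I_U(K) = (K : U)/(K_0 : U)$, which confines the values to the compact interval $[1/(K_0 : K),\, (K : K_0)]$ independently of $U$. Elementary manipulations show that $I_U$ is left-invariant, monotone, and subadditive, and that it becomes approximately additive on disjoint compacts once $U$ is small enough that no single translate of $U$ can meet both sets. The critical step is to produce a genuine finitely additive left-invariant content $I$ on compacts as $U$ shrinks to the identity. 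The standard device is Tychonoff: the family $\{I_U\}$ lives in the compact product space $\prod_K [1/(K_0:K),\,(K:K_0)]$, so a cluster point $I$ exists along the filter of identity neighborhoods. One then verifies that $I$ inherits left-invariance, monotonicity and finite additivity on disjoint compacts. From $I$ I build an outer measure by the infimum construction of \cref{thm:caratheodoty}, applied with covers by open sets of finite $I$-approximation, and obtain a Borel measure $\mu$; $\sigma$-regularity is automatic since $I$ was defined compact-inward and transferred open-outward.

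For uniqueness, let $\mu$ and $\nu$ be two left Haar measures, and pick any nonnegative $g \in C_c(G)$ with $\int g\, d\nu > 0$. I would prove that for every $f \in C_c(G)$ the ratio
\[
\frac{\int f\, d\mu}{\int g\, d\nu}
\]
is symmetric in $(\mu, \nu)$ up to an explicit factor, by writing $\int f\, d\mu \cdot \int g\, d\nu$ as a double integral, applying Fubini (justified by compact support), performing the change of variables $y \mapsto xy$ inside using left-invariance of $\nu$, and then the change $x \mapsto y^{-1}x$ using left-invariance of $\mu$. Rearranging forces $\int f\, d\mu = c \int f\, d\nu$ for the single constant $c = \int g\, d\mu / \int g\, d\nu$, independently of $f$, so $\mu = c\nu$.

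The main obstacle is the existence step, specifically the passage from the approximately-additive family $\{I_U\}$ to a genuinely additive left-invariant content. Subadditivity and left-invariance pass through pointwise limits trivially, but finite additivity requires exploiting the asymptotic disjointness condition as $U$ shrinks, and the existence of a coherent limit is nonconstructive, relying on Tychonoff (equivalently, a Banach limit or an ultrafilter). Once the content is in hand, the Carath\'eodory extension and the propagation of left-invariance from content to measure are routine, and uniqueness follows cleanly from the Fubini identity above.
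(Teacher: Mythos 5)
The paper gives no proof of this theorem: it explicitly states that deep proofs related to the Haar measure are omitted, and offers only a one-sentence sketch of the construction (measure a compact $K$ by the number of left translates of a small open $U$ needed to cover it, then take limits). Your existence argument is precisely the classical Haar--Weil elaboration of that sketch --- covering indices $(K:U)$, normalization by a reference compact $K_0$, a cluster point via Tychonoff in the product of intervals, and a Carath\'eodory extension as in \cref{thm:caratheodoty} --- so on that half you are filling in exactly the route the paper alludes to. One small imprecision: the lower bound $1/(K_0:K)$ only makes sense when $K$ has nonempty interior; for a general compact $K$ the correct compact interval is $[0,\,(K:K_0)]$, which is all the Tychonoff argument needs.

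The genuine gap is in your uniqueness step. Writing $\int f\,d\mu\cdot\int g\,d\nu$ as a double integral and substituting $y\mapsto xy$ by left-invariance of $\nu$ gives $\iint f(x)g(xy)\,d\mu(x)\,d\nu(y)$, but the subsequent substitution $x\mapsto y^{-1}x$ by left-invariance of $\mu$ sends $g(xy)$ to $g(y^{-1}xy)$ rather than decoupling the variables; to decouple you would need $x\mapsto xy^{-1}$, a \emph{right} translation, which a left Haar measure does not respect unless $G$ is unimodular. Since the theorem is stated for general locally compact Hausdorff groups (and the paper immediately afterwards introduces the modular function precisely because left and right invariance differ), the naive symmetric Fubini identity does not close. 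The standard repair is the auxiliary ratio function of Folland's proof: for $f,g\in C_c(G)$ with $g\ge 0$, $g\ne 0$, set
\begin{align*}
  h(x,y) = \frac{f(x)\,g(yx)}{\int_G g(tx)\,d\nu(t)},
\end{align*}
integrate $h$ in the two orders, and use left-invariance of $\mu$ via $x\mapsto y^{-1}x$ on the second order; the denominator is exactly what absorbs the failure of right-invariance and yields $\int f\,d\mu = c\int f\,d\nu$ with $c$ independent of $f$. Without this (or an equivalent device such as first treating the unimodular case and then invoking the modular function), the uniqueness argument as written fails for non-unimodular groups such as the $ax+b$ group.
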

It can be shown that $\mu(U) > 0$ for any non-empty open $U \in \mathcal{B}(G)$ and
$\mu(K) < \infty$ for any compact $K \in \mathcal{B}(G)$.

The construction idea is to define the measure of a subset $K \in \mathcal{B}(G)$
as the number of left-translations of a small $U \in  \mathcal{B}(G)$ necessary to cover $K$.
It is made precise by taking appropriate limits and enforcing measure properties.

Now define the left action operator $\lambda_u(f)(g) = f(u^{-1}g)$.
Given a left Haar measure $\mu$, its left invariance implies
\begin{align}
  \int \lambda_s(f)\,d\mu = \int f \, d\lambda_s(\mu) = \int f d\mu
\end{align}
for any \fun{f}{G}{\C} and $s \in G$.
We write $d\mu(g) = dg$ to simplify the notation;
then the relation $\int_G f(s^{-1}g) \,dg = \int_G f(g) \,dg$
gives a variable substitution formula that appears in many proofs.
For functions on the line, this translates to the usual
$\int_{-\infty}^{\infty} f(x-y)\, dx = \int_{-\infty}^{\infty} f(x)\, dx$,
where the Lebesgue measure is also a Haar measure.
\begin{example}
  Consider again the group $G$ of unitary complex numbers of the form $g_\theta = e^{i\theta}$,
  for $-\pi \le \theta < \pi$, and the function \fun{\lambda}{G}{\R} such that $\lambda(e^{i\theta}) = \theta$.
  We obtain the Haar measure from the Lebesgue measure $\mu_L$ on $\R$ as
  $\mu(A) = \mu_L(\lambda(A))$; it can be shown to be left-invariant.
\end{example}
\begin{example}
  For the group $\GL(n, \R)$, the Haar measure is given by $dA / |det(A)|^n$,
  where $dA$ is the Lebesgue measure on $R^{n^2}$.
\end{example}
\noindent On a Lie group of dimension $n$, we can construct an alternating $n$-form on the tangent space at the
identity and transport it to the tangent space at any point using left group actions.
The result is a left-invariant differential $n$-form (volume form) on the group that induces the left Haar measure \cite{hall2015lie}.

Next, we introduce modular functions, which are useful to determine some group properties.
\begin{definition}[modular function]
Let $\mu$ be a left Haar measure on a group $G$,
and define the right action operator $\rho_s(f)(g) = f(gs)$.
It follows that $\rho_s(\mu)$ is also a left Haar measure and
since the left Haar measure is unique up to scalar multiplication,
we have $\rho_s(\mu) = \Delta(s)\mu$ for \fun{\Delta}{G}{(0, \infty]}.
We call the function $\Delta$ a \emph{modular function}.
If $\Delta(s) = 1$ for all $s \in G$, we call $G$ \emph{unimodular}.
\end{definition}
\noindent In particular, a left Haar measure is also a right Haar measure
if and only if the group is unimodular.
Abelian groups are unimodular, and so are compact groups.

Next, we want to obtain measures on homogeneous spaces from measures on groups.
Let $G$ be a locally compact group with a subgroup $H$.
Now consider the homogeneous space $G/H$
where we suppose there is a measure $\gamma$.
We call $\gamma$ \emph{$G$-invariant} if ${\lambda_u(\gamma) = \gamma}$, for all $u \in G$,
where $\lambda_u$ is the left action operator ${\lambda_u(f)(g) = f(u^{-1}g)}$.
The following theorem gives the conditions for the existence of a $G$-invariant measure.
\begin{theorem}
  \label{thm:hmeasure}
  Let $G$ be a locally compact group with a subgroup $H$,
  $\mu$ a left Haar measure on $G$ and $\xi$ a left Haar measure in $H$.
  There is a unique G-invariant measure $\gamma$ on G/H (up to scalar multiplication)
  if and only if the modular function $\Delta_H$ equals the restriction of $\Delta_G$ to $H$.
  We can then write
  \begin{align*}
    \int\limits_G f(u)\,d\mu(u) = \int\limits_{G/H}\int\limits_{H} f(uh)\,d\xi(h)d\gamma(uH),
  \end{align*}
  for any function $f$ of compact support on $G$.
\end{theorem}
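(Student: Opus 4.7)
The plan is the classical Weil construction: realize $\gamma$ as a positive linear functional on $C_c(G/H)$ obtained by pushing the integral against $\mu$ through the averaging operator $T\colon C_c(G)\to C_c(G/H)$ defined by
\begin{align*}
  (Tf)(uH) = \int_H f(uh)\,d\xi(h).
\end{align*}
Left-invariance of $\xi$ shows $Tf$ descends to $G/H$, and a partition-of-unity argument gives surjectivity: given $\phi\in C_c(G/H)$ and a nonnegative $g\in C_c(G)$ with $Tg>0$ on $\operatorname{supp}\phi$, the function $f(u)=g(u)\phi(uH)/(Tg)(uH)$ lies in $C_c(G)$ and satisfies $Tf=\phi$.

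For the necessity direction, assume a $G$-invariant $\gamma$ satisfies the integration formula. Apply it to $\rho_s f$ for $s\in H$. On one hand, $\int_G \rho_s f\,d\mu = \Delta_G(s)\int_G f\,d\mu$ by the definition of $\Delta_G$. On the other, the substitution $h\mapsto hs$ inside the $\xi$-integral gives $T(\rho_s f)=\Delta_H(s)\,Tf$, hence $\int_{G/H}T(\rho_s f)\,d\gamma=\Delta_H(s)\int_{G/H}Tf\,d\gamma$. Choosing $f$ with $\int_G f\,d\mu\neq 0$ forces $\Delta_G(s)=\Delta_H(s)$ on $H$.

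For sufficiency, assume $\Delta_G|_H=\Delta_H$ and define $I$ on $C_c(G/H)$ by $I(\phi)=\int_G f\,d\mu$ for any $f$ with $Tf=\phi$. The well-definedness of $I$ is the crux. Pick $\psi\in C_c(G)$ with $T\psi\equiv 1$ on the projection of $\operatorname{supp}f$. Then $\int_G f\,d\mu = \int_G f(u)(T\psi)(uH)\,d\mu(u)$; expanding $T\psi$, swapping integrals by Fubini, performing the substitution $u\mapsto uh^{-1}$ inside $\mu$ (which produces the factor $\Delta_G(h)$), swapping again, invoking $\Delta_G(h)=\Delta_H(h)$ for $h\in H$, and finally using the inversion identity $\int_H F(h^{-1})\,d\xi(h) = \int_H F(h)\Delta_H(h)\,d\xi(h)$ on $H$ to collapse the inner integral, yields
\begin{align*}
  \int_G f\,d\mu = \int_G \psi(v)\,(Tf)(vH)\,d\mu(v),
\end{align*}
which depends only on $Tf$, proving well-definedness. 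Positivity of $I$ is immediate, so the Riesz representation theorem on the locally compact Hausdorff space $G/H$ produces a Radon measure $\gamma$ with $\int\phi\,d\gamma = I(\phi)$. G-invariance of $\gamma$ follows from the intertwining $T(\lambda_t f)=\lambda_t(Tf)$ combined with left-invariance of $\mu$, and the stated integration formula is then just the definition of $\gamma$ unpacked. Uniqueness follows from the uniqueness of the Haar measure, since any $G$-invariant $\gamma'$ makes $f\mapsto \int_{G/H}Tf\,d\gamma'$ a left-invariant positive functional on $C_c(G)$, hence a scalar multiple of $\mu$.

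The main obstacle is the well-definedness computation: two successive changes of variable contribute modular factors ($\Delta_G$ from the $\mu$-substitution, $\Delta_H$ from the $\xi$-inversion), and the hypothesis $\Delta_G|_H=\Delta_H$ is exactly what makes them cancel. Everything else is essentially formal once that identity is in hand.
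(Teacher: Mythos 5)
The paper states this theorem without proof: the surrounding text explicitly defers the ``deep proofs related to the Haar measure'' to the cited references (Folland; Gallier--Quaintance), so there is no in-paper argument to compare against. Your proposal supplies exactly the standard Weil construction found in those references (Folland's \emph{A Course in Abstract Harmonic Analysis}, Thm.~2.49 and the lemmas preceding it), and it is correct: the averaging map $T$, its surjectivity via the bump-function quotient $f(u)=g(u)\phi(uH)/(Tg)(uH)$, the well-definedness computation in which the modular factor from the substitution $u\mapsto uh^{-1}$ in $\mu$ and the factor from the inversion identity in $\xi$ cancel precisely when $\Delta_G|_H=\Delta_H$, the Riesz representation step, and the reduction of both $G$-invariance and uniqueness to uniqueness of the Haar measure are all in order. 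One small logical wrinkle: in the necessity direction you invoke the integration formula, which is only established in the sufficiency half. To make ``existence of a nonzero $G$-invariant $\gamma$ forces $\Delta_G|_H=\Delta_H$'' self-contained, first observe (as you already do in the uniqueness paragraph) that $f\mapsto\int_{G/H}Tf\,d\gamma$ is a nonzero left-invariant positive linear functional on $C_c(G)$, hence a positive multiple of $f\mapsto\int_G f\,d\mu$; after rescaling $\gamma$ the integration formula holds, and your computation with $\rho_s$ for $s\in H$ then goes through. With that reordering, and modulo the routine lemma that a compact subset of $G/H$ lifts to a compact subset of $G$ (needed to produce $g$ with $Tg>0$ on $\operatorname{supp}\phi$), the proof is complete.
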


\begin{remark}
  When the $\Delta_H$ is not equal to the restriction of $\Delta_G$ to $H$,
  there is a weaker form of invariance in measures, called \emph{quasi-invariance}.
  Quasi-invariant measures on $G/H$ always exist. Refer to \textcite{folland2016course,gallierncharm}
  for details.
\end{remark}

\section{Harmonic analysis}
\label{sec:harm}
Recall the Fourier series expansion of a periodic function $f$
\begin{align*}
  f(\theta) &= \sum_{m \in \Z} \hat{f}(m) e^{i  m \theta}, \\
  (\mathcal{F}f)(m) &= \hat{f}(m) = \frac{1}{2\pi} \int\limits_{-\pi}^{\pi} f(\theta)e^{-i m \theta}\, d\theta.
\end{align*}
A periodic scalar-valued function $f$ can be seen as a function on the circle \fun{f}{S^1}{\R}.
The expansion in Fourier series is a decomposition in the basis $\{e^{im\theta}\}$ for $m \in \Z$
of the space of square-integrable functions on the circle, $L^2(S^1)$.
Fourier analysis has numerous applications in signal processing, differential equations and number theory.
Most important for our purposes is the convolution theorem,
\begin{align}
  \mathcal{F}(f * k)(m) = (\mathcal{F}f)(m)(\mathcal{F}k)(m) = \hat{f}(m)\hat{k}(m),
\end{align}
which states that convolution in the spatial domain corresponds to multiplication in the spectral domain.
This has immense practical implications for efficient computation of convolutions,
thanks to the \fft\ algorithm.

In this section, we generalize these concepts to functions on compact groups.
We consider a compact group $G$,
and the Hilbert space $L^2(G)$ of square integrable functions on $G$.
Integrals on compact groups are well defined as discussed in \cref{sec:haar}.
We state the Peter-Weyl theorem, which gives an
orthonormal basis for $L^2(G)$ constructed from irreducible representations of $G$.
This paves the way to harmonic analysis on compact groups,
which we demonstrate by generalizing the Fourier transform and convolution theorem.
Again these have important practical applications
and are used to compute group convolutions in recent equivariant neural networks.
Finally, we show how the theory applies to homogeneous spaces of compact groups.
\subsection{The Peter-Weyl Theorem}
The Peter-Weyl theorem gives an explicit
orthonormal basis for $L^2(G)$, constructed from irreducible representations of a group $G$.
The basis is formed by matrix elements, which we define first.
\begin{definition}[matrix elements]
  \label{def:me}
  Let $\rho$ be a unitary representation of a compact group $G$.
  We denote $\phi_{x,y}(g) = \inner{\rho(g)(x), y}$ the \emph{matrix elements} of $\rho$.
  Note that $\phi_{e_i,e_j}(g)$ is one entry of the matrix $\rho(g)$ when $e_i,\, e_j$ are basis vectors,
  so we define $\rho_{ij}(g) = \phi_{e_i,e_j}(g)$.%
\end{definition}
\begin{theorem}[Peter-Weyl]
  \label{thm:pw}
  Let $G$ be a compact group. We present the theorem in three parts.
  The first relates matrix elements and spaces of functions on $G$.
  The second decomposes representations of $G$,
  and the third gives a basis for $L^2(G)$.
  \paragraph{Part I} The linear span of the set of matrix elements of unirreps
  of $G$ is dense in the space of continuous complex valued functions on $G$, under the uniform norm.
  This implies it is also dense in $L^2(G)$.
  \paragraph{Part II} Let $\hat{G}$ be the set of equivalence classes of unirreps of $G$.
  For a unirrep $\rho$ of $G$,
  we denote its representation space by $H_\rho$ where $\dim H_\rho = d_\rho$,
  and its equivalence class by $[\rho] \in \hat{G}$.
  If $\pi$ is a (reducible) unitary representation of $G$,
  it splits in the orthogonal direct sum $H_\pi = \bigoplus_{[\rho] \in \hat{G}} M_\rho $,
  where $M_\rho$ is the largest subspace where $\pi$ is equivalent to $\rho$.
  Each $M_\rho$ splits in equivalent irreducible subspaces $M_\rho = \bigoplus_{i=1}^{n} H_\rho$,
  where $n$ is the \emph{multiplicity} of $[\rho]$ in $\pi$.
  \paragraph{Part III}
  Let $\varepsilon_\rho$ be the linear span of the matrix elements of $\rho$ for $[\rho] \in \hat{G}$.
  $L^2(G)$ can be decomposed as $L^2(G) = \bigoplus_{[\rho] \in \hat{G}} \varepsilon_\rho$.
  If $\pi$ is a regular representation on $L^2(G)$, the multiplicity of $[\rho] \in \hat{G}$ in $\pi$ is $d_\rho$.
  An orthonormal basis of $L^2(G)$ is
  \begin{equation*}
    \{ \sqrt{d_\rho}\rho_{ij} \mid 1 \le i,\, j \le d_\rho,\, [\rho] \in \hat{G} \}
  \end{equation*}
  where $\rho_{ij}$ is as in \cref{def:me}.
  Constructing the basis involves choosing a representative per equivalence class.
\end{theorem}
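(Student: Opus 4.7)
The plan is to prove the three parts in a slightly different order than stated, leveraging the Schur orthogonality relations as a bridge. First I would establish Schur orthogonality for matrix elements: given two unirreps $\rho, \rho'$ and any linear map $A$ between their representation spaces, the averaged operator $\widetilde{A} = \int_G \rho'(g^{-1}) A \rho(g)\,dg$ is a G-map, so by Schur's lemma (\cref{thm:schur}) it is either zero or a scalar multiple of identity. Taking $A$ to be a rank-one operator $e_k \otimes e_l^*$ and reading off matrix entries yields
\begin{align*}
  \int_G \rho_{ij}(g)\,\overline{\rho'_{kl}(g)}\,dg = \frac{\delta_{[\rho],[\rho']}\,\delta_{ik}\,\delta_{jl}}{d_\rho}.
\end{align*}
This immediately gives the orthonormality claim in Part III, independent of density.

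The main work is Part I: density of matrix elements in $C(G)$ under the uniform norm. My approach is via spectral theory of compact self-adjoint operators. For a continuous symmetric kernel $k$ on $G$ (meaning $k(g^{-1}) = \overline{k(g)}$), define the convolution operator $T_k f = k * f$ on $L^2(G)$. I would verify three things: (i) $T_k$ is Hilbert-Schmidt, hence compact, since $G$ is compact and $k$ is bounded; (ii) $T_k$ is self-adjoint by the symmetry assumption; (iii) $T_k$ commutes with the left regular representation $\lambda$, using left-invariance of the Haar measure. By the spectral theorem for compact self-adjoint operators, $L^2(G)$ decomposes into finite-dimensional eigenspaces $E_\mu$ for the nonzero eigenvalues of $T_k$, and each $E_\mu$ is $\lambda$-invariant because $T_k$ commutes with $\lambda$. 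Each $E_\mu$, being finite-dimensional and unitary, decomposes into unirreps. To finish, choose an approximate identity: a net of continuous symmetric kernels $k_\alpha$ concentrating at the identity so that $k_\alpha * f \to f$ uniformly for any $f \in C(G)$. Since each $k_\alpha * f$ lies in the closed span of the finite-dimensional $T_{k_\alpha}$-eigenspaces, and these are spanned by matrix elements of unirreps, we obtain uniform density.

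For Part II, I would define the isotypic projection $P_\rho$ on a unitary representation $\pi$ by averaging against the character: $P_\rho = d_\rho \int_G \overline{\chi_\rho(g)}\,\pi(g)\,dg$. Schur orthogonality applied inside $\pi$ shows $P_\rho$ is an orthogonal projection onto a closed subspace $M_\rho$ that is $\pi$-invariant, that $M_\rho \perp M_{\rho'}$ for $[\rho] \neq [\rho']$, and that $\pi|_{M_\rho}$ is a direct sum of copies of $\rho$. The fact that $\bigoplus M_\rho$ fills all of $H_\pi$ follows from Part I applied to matrix coefficients of $\pi$ itself: any vector orthogonal to all isotypic components would yield a nonzero continuous function on $G$ orthogonal to every matrix element of every unirrep, contradicting density.

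Part III is then assembled. Schur orthogonality gives that $\{\sqrt{d_\rho}\,\rho_{ij}\}$ is an orthonormal family in $L^2(G)$, Part I gives that its span is dense, so it is an orthonormal basis. The multiplicity statement for the regular representation follows by applying Part II to $\pi = \lambda$ and counting: $\varepsilon_\rho$ has dimension $d_\rho^2$ and splits as $d_\rho$ copies of $H_\rho$ under left translation, which is seen by fixing the second index of $\rho_{ij}$ and letting $\lambda$ act on the first. The hardest step is Part I, and within it the compactness of $T_k$ together with the construction of an approximate identity on a general compact group; the rest is bookkeeping once Schur orthogonality and spectral decomposition are in hand.
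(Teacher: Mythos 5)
The paper states this theorem without proof: it explicitly omits ``deep proofs related to the Haar measure and the Peter--Weyl theorem'' and defers to \textcite{gallierncharm}, so there is no in-paper argument to compare against. Taken on its own terms, your outline is the standard compact-operator proof (Schur orthogonality via averaging and \cref{thm:schur}, density via the spectral theorem for a compact self-adjoint convolution operator plus an approximate identity, isotypic projections via characters), and the overall structure is sound. Three points need repair or expansion. First, with the convention $(k*f)(g)=\int_G k(u)f(u^{-1}g)\,du$ used in the paper, the operator $T_k f = k*f$ commutes with \emph{right} translations, not with the left regular representation as you claim; you should either take $T_k f = f*k$ (which does commute with $\lambda$, as shown in the paper's equivariance computation) or decompose the eigenspaces under the right regular representation --- either choice works, but the pairing you wrote is the one that fails. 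Second, the assertion that the finite-dimensional eigenspaces ``are spanned by matrix elements of unirreps'' is the crux of Part I and needs its own argument: eigenfunctions for nonzero eigenvalues are continuous (each equals $\mu^{-1}T_k$ of itself with $k$ continuous), so for an irreducible translation-invariant subspace with orthonormal basis $\{f_i\}$ one writes $f_i(gs)=\sum_j \rho_{ji}(s)f_j(g)$ and evaluates at $g=e$ to exhibit each $f_i$ as a linear combination of matrix elements. Third, ``lies in the closed span of the eigenspaces'' is an $L^2$ statement, whereas Part I claims \emph{uniform} density; the gap is closed by noting that the component of $f$ in $\ker T_{k_\alpha}$ contributes nothing to $k_\alpha * f$ and that $\norm{(f-P_Nf)*k_\alpha}_\infty \le \norm{f-P_Nf}_2\norm{k_\alpha}_2$ by Cauchy--Schwarz, so the finite spectral truncations converge to $k_\alpha*f$ in the sup norm. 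You also tacitly assume that unirreps of a compact group are finite dimensional (you write $d_\rho$ throughout); the paper's statement does too, but a self-contained proof should note that this follows from the same compact-operator technique. With these repairs the argument is complete, and Parts II and III follow from Part I and Schur orthogonality essentially as you describe.
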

\begin{example}
  The $\SO(3)$ irreducible representations $\rho^\ell$ can be written as $2\ell+1 \times 2\ell+1$ matrices for $\ell \in \N$, with entries \fun{\rho_{ij}^\ell}{\SO(3)}{\C},
  \begin{align*}
    \rho^{0} = (\rho_{0,0}^{0}), &&
                                   \rho^{1} = \threebythree
                                   {\rho_{-1,-1}^{1}}{\rho_{-1,0}^{1}}{\rho_{-1,1}^{1}}
                                   {\rho_{0,-1}^{1}}{\rho_{0,0}^{1}}{\rho_{0,1}^{1}}
                                   {\rho_{1,-1}^{1}}{\rho_{1,0}^{1}}{\rho_{1,1}^{1}}, &&
                                                                                         \rho^{2} = \cdots\, ,
  \end{align*}
  and the matrix elements $\rho_{i,j}^{\ell}$ form a basis for $L^2(\SO(3))$.
  We will derive these elements in \cref{sec:su2}.
\end{example}
For simplicity, we avoided introducing Hilbert algebras, ideals,
and the interesting connection between representations of groups and of algebras.
We refer the reader to \textcite{gallierncharm} for a complete description
of the Peter-Weyl theorem, with proofs.
\subsection{Fourier analysis on compact groups}
Part III of \cref{thm:pw} gives an orthonormal basis for $L^2(G)$,
so for any $f \in L^2(G)$ we can write,
\begin{align}
  f(g) &= \sum_{[\rho] \in \hat{G}}\sum_{i,j=1}^{d_\rho} c_{ij}^\rho \rho_{ij}(g), \label{eq:fwcoord} \\
  c_{ij}^\rho &= d_\rho \int\limits_{g \in G} f(g)\overline{\rho_{ij}(g)}\, dg, \label{eq:decomp}
\end{align}
where \cref{eq:decomp} is the inner product in $L^2(G)$,
the matrix elements $\rho_{ij}$ are as in \cref{def:me},
and the coefficients $c_{ij}^\rho$ absorb an extra $\sqrt{d_\rho}$ for simplification.

Now we define the Fourier transform of $f \in L^2(G)$ as a function on $\hat{G}$
whose values are on $\GL(H_\rho)$ for an input $[\rho]$.
\begin{equation}
  \hat{f}(\rho) = \mathcal{F}(f)([\rho]) = \int\limits_{g \in G} f(g) \rho(g)^{*}\, dg, \label{eq:ft}
\end{equation}
where $\rho$ is the representative of $[\rho]$,
$^{*}$ indicates the conjugate transpose,
and we introduce $\hat{f}(\rho)$ to shorten notation.
It is easy to see that the element $i$, $j$ of $\hat{f}(\rho)$
corresponds to $\frac{c_{ji}^\rho}{d_\rho}$ as defined in \cref{eq:decomp}, and
\begin{equation*}
  \sum_{i,j=1}^{d_\rho} c_{ij}^\rho \rho_{ij}(g) =
  \sum_{i,j=1}^{d_\rho} d_\rho \hat{f}(\rho)_{ji}\rho_{ij}(g) =
  d_\rho \tr(\hat{f}(\rho) \rho(g)).
\end{equation*}
Applying this result to \cref{eq:fwcoord} yields the Fourier inversion formula,
\begin{equation}
  f(g) = \sum_{[\rho] \in \hat{G}} d_\rho \tr(\hat{f}(\rho) \rho(g)). \label{eq:fi}
\end{equation}
\begin{remark}
\Cref{eq:ft,eq:fi} give the Fourier transform and inverse
independently of the choice of a basis, in contrast with \cref{eq:fwcoord,eq:decomp}.
\end{remark}
\begin{remark}
We are not discussing convergence here;
refer to \textcite{folland2016course,gallierncharm} for details.
\end{remark}
\begin{example}
  Consider the multiplicative group of complex numbers of the form $e^{i\theta}$,
  identified with the planar rotation group $\SO(2)$.
  The unirreps of this group on $\C$ are given by $\rho_n(e^{i\theta}) = e^{in\theta}$,
  for $n \in \Z$.
  Since they are one dimensional, they are also the matrix elements and hence form an orthonormal basis
  for $L^2(\SO(2))$.
  We can index $\rho_n$ by $n$ and $\SO(2)$ by $\theta$,
  and write the Fourier transform and inverse on $L^2(\SO(2))$ as
  \begin{align}
    \hat{f}(n) &= \int\limits_{0}^{2\pi} f(\theta) e^{-in\theta} \, \frac{d\theta}{2\pi}, \\
    f(\theta) &= \sum_{n=-\infty}^{\infty} \hat{f}(n) e^{in\theta},
  \end{align}
  which are the familiar formulas for the Fourier series of periodic functions.

  This simple example shows how the Fourier analysis of periodic functions on the line fit in the theory described.
  See \cref{sec:su2} for a more complete example with a non-commutative group.
\end{example}
\subsection{Convolution theorem on compact groups}
Given the existence of the left Haar measure as discussed in \cref{sec:haar},
we define the convolution between functions \fun{f,k}{G}{\C} on a group $G$ as
\begin{equation}
  (f * k)(g) = \int\limits_{u \in G} f(u)k(u^{-1}g)\, du = \int\limits_{u \in G} f(gu)k(u^{-1})\, du.
  \label{eq:gconv}
\end{equation}

A simple change of variables leveraging the measure left-invariance
shows that group convolution is equivariant,
\begin{align*}
  (\lambda_u f * k)(g)
  &= \int\limits_{v \in G} f(u^{-1}v) k(v^{-1}g) \, dv\\
  &= \int\limits_{w \in G} f(w) k((uw)^{-1}g)  \, dw && (v \mapsto uw) \\
  &= \int\limits_{w \in G} f(w) k(w^{-1}u^{-1}g)  \, dw\\
  &= (f * k)(u^{-1}g) \\
  &= (\lambda_u (f * k))(g).
\end{align*}

\begin{theorem}[Convolution theorem]
  \label{thm:conv}
  Let $f$ and $k$ be square integrable functions on a compact group $G$ ($f,\,g \in L^2(G)$).
  The Fourier transform of the convolution $f*k$ is $(\widehat{f * k})(\rho) = \hat{k}(\rho) \hat{f}(\rho)$.
\end{theorem}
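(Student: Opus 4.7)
The plan is to expand $\widehat{f*k}(\rho)$ by its defining integral in equation~\eqref{eq:ft}, substitute the definition~\eqref{eq:gconv} of the group convolution, and then reduce the resulting double integral to $\hat{k}(\rho)\hat{f}(\rho)$ using three ingredients: Fubini's theorem, the left-invariance of the Haar measure, and the homomorphism property of the representation $\rho$.

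Concretely, I would start from
\begin{align*}
  \widehat{f*k}(\rho) = \int_G (f*k)(g)\,\rho(g)^*\,dg
  = \int_G \int_G f(u)\,k(u^{-1}g)\,\rho(g)^*\,du\,dg.
\end{align*}
Since $G$ is compact and $f,k \in L^2(G)$, the double integral is absolutely convergent (entrywise, viewing $\rho(g)^*$ as a matrix with bounded entries because $\rho$ is unitary), so Fubini applies and I may swap the order of integration. Then I perform the substitution $v = u^{-1}g$, i.e.\ $g = uv$, inside the inner integral. Left-invariance of the Haar measure (\cref{sec:haar}) gives $dg = dv$, which turns the integrand into $f(u)\,k(v)\,\rho(uv)^*$.

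The crucial algebraic step comes next: since $\rho$ is a group homomorphism, $\rho(uv) = \rho(u)\rho(v)$, and since $(AB)^* = B^*A^*$, we get $\rho(uv)^* = \rho(v)^*\rho(u)^*$. Here one must be careful about the order because the representations we consider are generally non-commutative; this is the only spot where a mistake is easy to make and accounts for the product appearing as $\hat{k}(\rho)\hat{f}(\rho)$ rather than $\hat{f}(\rho)\hat{k}(\rho)$. Factoring out the pieces not depending on the integration variable in each integral then yields
\begin{align*}
  \widehat{f*k}(\rho)
  &= \int_G f(u)\left(\int_G k(v)\,\rho(v)^*\,dv\right)\rho(u)^*\,du \\
  &= \hat{k}(\rho) \int_G f(u)\,\rho(u)^*\,du = \hat{k}(\rho)\,\hat{f}(\rho).
\end{align*}

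The main obstacle is not analytical depth but rather getting the non-commutative bookkeeping right: tracking how the change of variables interacts with the homomorphism property and the conjugate transpose so that the factors end up in the correct order. Beyond that, the argument is essentially the same routine calculation as for the classical convolution theorem on $\R$, with the Haar measure playing the role of the Lebesgue measure.
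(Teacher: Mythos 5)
Your proof is correct and follows essentially the same route as the paper's: expand via the defining integral, swap the order of integration, substitute $v=u^{-1}g$ using left-invariance of the Haar measure, and apply $\rho(uv)^{*}=\rho(v)^{*}\rho(u)^{*}$ to factor the double integral into $\hat{k}(\rho)\hat{f}(\rho)$. Your added care about Fubini and the order of the non-commutative factors is a welcome refinement of the same argument.
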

\begin{proof}
Let us compute the Fourier transform of $f * k$ using \cref{eq:ft},
\begin{align*}
  (\widehat{f * k})(\rho) &= \int\limits_{g \in G} \left(\int\limits_{u \in G} f(u)k(u^{-1}g)\, du\right) \rho(g)^{*}\, dg \\
    &= \int\limits_{u \in G} \int\limits_{g \in G} f(u)k(u^{-1}g) \rho(g)^{*} \,dg\, du \\
    &= \int\limits_{u \in G} \int\limits_{v \in G} f(u)k(v) \rho(uv)^{*} \,dv\, du && (v = u^{-1}g) \\
    &= \int\limits_{v \in G} k(v) \rho(v)^{*} \int\limits_{u \in G} f(u)\rho(u)^{*} \,dv\, du
    && \text{(homomorphism, reorder)} \\
    &= \hat{k}(\rho) \hat{f}(\rho).
\end{align*}
\end{proof}
\begin{remark}
  There is an analogous cross-correlation theorem that we prove in the same way.
  We define the group cross-correlation as
  \[(f \star k)(g) = \int\limits_{u \in G} f(u)  k(g^{-1} u) \, du \]
  and follow the same steps as before, obtaining
\begin{align*}
  (\widehat{f \star k})(\rho)
  &= \int\limits_{v \in G} k(v) \rho(v^{-1})^{*} \int\limits_{u \in G} f(u)\rho(u)^{*} \,dv\, du.
\end{align*}
Note that the only difference is the term $v^{-1}$.
Since $\hat{k}(\rho) = \int_{v\in G}k(v)\rho(v)^*$,
assuming real-valued $k$ we have
$\hat{k}(\rho)^* = \int_{v\in G}k(v)\rho(v^{-1})^*$ and
\begin{align}
  (\widehat{f \star k})(\rho) = \hat{k}(\rho)^*\hat{f}(\rho).
  \label{eq:groupcorrspectral}
\end{align}
\end{remark}
\noindent This shows that the Fourier transform of the compact group convolution
is the matrix product of the Fourier transforms of each input.
It generalizes the convolution theorem on the circle,
which says that the Fourier transform of the convolution is the scalar multiplication
of the inputs Fourier transforms.

The convolution theorem is fundamental for the efficient computation of convolutions,
since the \fft\ can be generalized to compact groups \cite{driscoll1994computing,kostelec2008ffts}.
Furthermore, the spectral computation avoids interpolation errors and
extra computational cost caused by the lack of regular grids for arbitrary groups.
\subsection{Examples: $\SL(2, \C)$, $\SU(2)$ and $\SO(3)$}
\label{sec:su2}
Now we find expressions for the matrix elements of
representations of $\SL(2, \C)$, $\SU(2)$ and $\SO(3)$,
which allow computing the Fourier transforms and convolutions on these groups.
We follow one of the approaches by \textcite{vilenkin1978special},
also used by \textcite{dieudonn1980special,gurarie2007symmetries}.

The strategy is to first find the matrix elements for irreps of $\SL(2, \C)$,
then restrict them to $\SU(2)$ and $\SO(3)$.
\subsubsection{Representations of $\SL(2, \C)$}
The special linear group $\SL(2, \C)$ consists of $2\times 2$ complex matrices with determinant 1,
\begin{align}
  g = \twobytwo{a}{c}{b}{d} \label{eq:gsl2}
\end{align}
where $ad - bc = 1$.

Now consider the space $V_{\ell}$ of homogeneous polynomials of degree $2\ell$ in two complex variables,
where $\ell$ is integer or half-integer,
\begin{align*}
  x = \twobyone{x_1}{x_2}, && P_\ell(x) = P_\ell(x_1, x_2) = \sum_{i=-\l}^\l\alpha_ix_1^{\ell-i}x_2^{\ell+i}.
\end{align*}
We define \fun{\pi_\ell}{\SL(2,\C)}{\GL(V_{\ell})} as
\begin{align}
  (\pi_\ell(g)P_\ell)(x) = P_\ell(g^{-1}x), \label{eq:reprsl}
\end{align}
which is linear and a group homomorphism.
Hence, $\pi_\ell$ is a representation of $\SL(2, \C)$ on $V_{\ell}$
(of dimension $2\ell+1$).
Furthermore, it can be shown that these are irreducible,
and in fact these are the only irreps of $\SL(2, \C)$ and $\SU(2)$, up to equivalence.

Now let us derive expressions for the matrix elements.
Consider the polynomial in one variable
$Q_\ell(x) = P_\ell(x, 1) = \sum_{i=-\l}^\l\alpha_ix^{\ell-i}$, of degree $2\ell$.
Writing $P_\ell$ in terms of $Q_\ell$ yields
\begin{align}
  P_\ell(x_1, x_2) = x_2^{2\ell}Q_\ell(x_1/x_2). \label{eq:pq}
\end{align}
We denote $H_\ell$ the space of all polynomials $Q_\ell$ (of degree $2\ell$) for $\ell \ge 0$.
We rewrite \cref{eq:reprsl} for $g$ as in \cref{eq:gsl2} where $g^{-1}=\twobytwo{d}{-c}{-b}{a}$,
\begin{align}
  (\pi_\ell(g)P_\ell)(x_1, x_2) = P_\ell(dx_1-cx_2, -bx_1 + ax_2), \label{eq:reprp}
\end{align}
and define $\rho_\ell$ as the application of $\pi_\ell$ to $Q_\ell\in H_\ell$ using \cref{eq:reprp,eq:pq}
\begin{align}
  (\rho_\ell(g)Q_\ell)(x) = (-bx + a)^{2\ell}Q_\ell\left(\frac{dx-c}{-bx + a}\right). \label{eq:repq}
\end{align}

The monomials $x^{\ell-m}$ for $-\ell \le m \le \ell$ are a basis of $H_\ell$.
Now consider the inner product on $H_\ell$ defined by
\begin{align}
  \inner{x^{\ell-m}, x^{\ell-n}} &= 0, && m \neq n \label{eq:innerq1} \\
  \inner{x^{\ell-m}, x^{\ell-m}} &= (\ell-m)!(\ell+m)!, \label{eq:innerq2}
\end{align}
which is adapted from an inner product (sometimes called the Bombieri scalar product) on $V_\ell$
\[ \inner{x^{\ell+m}y^{\ell-m},x^{\ell+m}y^{\ell-m}} = (\ell+m)!(\ell-m)!/(2\ell)!. \]

It turns out the representation $\rho_\ell$ defined as in \cref{eq:repq}
is unitary under the inner product defined by \cref{eq:innerq1,eq:innerq2}.
The following is an orthonormal basis $\{\psi_m\}$ for $H_\ell$ with this inner product
\begin{align*}
  \psi_m(x) = \frac{x^{\ell-m}}{\sqrt{(\ell-m)!(\ell+m)!}}.
\end{align*}
The element at position $(m, n)$%
\footnote{Not the conventional way of indexing since $-\ell \le m, n \le \ell$,
but convenient in our notation.}
of the matrix for $\rho_\ell(g)$ under this basis is
\begin{align}
\rho_{\ell}^{mn}(g) = \inner{\rho_{\ell}(g)(\psi_n), \psi_m}. \label{eq:rhoij}
\end{align}
According to \cref{eq:repq}, $\rho_\ell$ acts on $Q(x) = x^{\ell-n}$ as
\begin{align*}
\rho_\ell(g)x^{\ell-n} = (-bx + a)^{\ell+n}(dx-c)^{\ell-n}
\end{align*}
for $g$ as in \cref{eq:gsl2}.
We substitute it in \cref{eq:rhoij} to obtain
\begin{align}
  \rho_{\ell}^{mn}(g) = \frac{\inner{(-bx + a)^{\ell+n}(dx-c)^{\ell-n}, x^{\ell-m}}}
  {\sqrt{(\ell-n)!(\ell+n)!(\ell-m)!(\ell+m)!}}. \label{eq:rhoijlong}
\end{align}
Observe that $\inner{Q(x),x^{\ell-m}}$ for some polynomial $Q(x)$
is the coefficient of $x^{\ell-m}$ in $Q(x)$ multiplied by $(\ell-m)!(\ell+m)!$,
according to \cref{eq:innerq2}.
Recall that the Taylor formula for a function $Q(x)$ around $x=0$ is
$Q(x) = \sum_{n=0}^\infty \frac{d^n}{dx^n}\frac{x^n}{n!}$.
We apply it to obtain the coefficient of $x^{\ell-m}$ in \cref{eq:rhoijlong},
\begin{align}
  \rho_{\ell}^{mn}(g) = \sqrt{\frac{(\ell+m)!}{(\ell-n)!(\ell+n)!(\ell-m)!}}
  \frac{d^{\ell-m}}{dx^{\ell-m}}\left((-bx + a)^{\ell+n}(dx-c)^{\ell-n}\right)\Big|_{x=0},
  \label{eq:mesl2}
\end{align}
with $g=\twobytwo{a}{c}{b}{d}$ as usual.
Substituting $z=b(dx-c)$ and using that $ad-bc=1$ yields
\begin{align}
  \rho_{\ell}^{mn}(g) = \sqrt{\frac{(\ell+m)!}{(\ell-n)!(\ell+n)!(\ell-m)!}}
  \frac{b^{n-m}}{d^{n+m}}
  \frac{d^{\ell-m}}{dz^{\ell-m}}\left((1-z)^{\ell+n}z^{\ell-n}\right)\Big|_{z=-bc}.
  \label{eq:mesl2z}
\end{align}
This is a general formula for matrix elements of the unirreps
of $\SL(2, \C)$, which generate an orthonormal basis of $L^2(\SL(2, \C))$
as stated by the Peter-Weyl theorem.
\subsubsection{Representations of $\SU(2)$}
We now restrict the $\SL(2, \C)$ representations to $g \in \SU(2)$,
the group of $2\times 2$ unitary matrices with determinant \SI{1}{}.
So for $g \in \SU(2)$ we have $g^{*}g = gg^* = I$, which implies
\begin{align}
  g = \twobytwo{a}{-\overline{b}}{b}{\overline{a}} \label{eq:gsu2}
\end{align}
where $a\overline{a} + b\overline{b} = 1$, and the bar denotes the complex conjugate.
It follows that $\SU(2) < \SL(2,\C)$.
We can factor $g \in \SU(2)$ as
\begin{align}
  g_{\alpha\beta\gamma} =
  \twobytwo{e^{-i\alpha/2}}{0}{0}{e^{i\alpha/2}}
  \twobytwo{\cos(\beta/2)}{-\sin(\beta/2)}{\sin(\beta/2)}{\cos(\beta/2)}
  \twobytwo{e^{-i\gamma/2}}{0}{0}{e^{i\gamma/2}},
  \label{eq:factorsu2}
\end{align}
where $\alpha$, $\beta$ and $\gamma$ are ZYZ Euler angles,
$0 \le \alpha < 2\pi$, $0 \le \beta < \pi$ and $-2\pi \le \gamma < 2\pi$.
Now consider representations \fun{\rho_\ell}{\SU(2)}{\GL(H_\ell)},
which are a special case of the representations of $\SL(2, \C)$,
and hence inherit their properties.
Since $\rho_\ell$ is a group homomorphism,
\begin{align}
  \rho_\ell(g_{\alpha \beta \gamma}) =
  \rho_\ell(g_{\alpha 0 0})
  \rho_\ell(g_{0 \beta 0})
  \rho_\ell(g_{0 0 \gamma}).
  \label{eq:factorrho}
\end{align}
Since $\rho(g_{\alpha 0 0})$ corresponds to $a=e^{-i\alpha/2}$, $d=\overline{a}$, and $b=c=0$
in \cref{eq:repq}, we find that
$\rho_\ell(g_{\alpha 0 0})(\psi_m)= e^{-i\alpha m} \psi_m$,
which implies that only the diagonal elements of $\rho_\ell(g_{\alpha 0 0})$
are nonzero; they are
\begin{align}
\rho_\ell^{mm}(g_{\alpha 0 0}) = e^{-i\alpha m}. \label{eq:rholmm}
\end{align}
The expression for $\rho_\ell(g_{0 0 \gamma})$ is analogous.
The middle factor in \cref{eq:factorrho} is multiplied by diagonal matrices on both sides,
so we write the matrix elements
\begin{align*}
\rho_\ell^{mn}(g_{\alpha \beta \gamma}) = e^{-i(m\alpha + n\gamma)}\rho_\ell^{mn}(g_{0 \beta 0}).
\end{align*}
To compute $\rho_\ell^{mn}(g_{0 \beta 0})$,
we apply
\begin{align*}
a=d=\cos(\beta/2) \text{, } b=\sin(\beta/2) \text{ and } c=-\sin(\beta/2)
\end{align*}
to \cref{eq:mesl2z},
and note that the derivative is evaluated at $z=-bc=\sin^2(\beta/2)$.
We define $P_{mn}^\ell(\cos\beta) = \rho_\ell^{mn}(g_{0 \beta 0})$,
and make the substitution
\[z \mapsto \frac{1-x}{2},\]
where the derivative is now evaluated at $x=-2\sin^2(\beta/2)+1 = \cos\beta$.
Then $b=\sqrt{(1-x)/2}$ and $d=\sqrt{(1+x)/2}$. We have,
\begin{align}
  P_{mn}^\ell(x)
  &=  c_{mn}^\ell \frac{\left(\frac{1-x}{2}\right)^{\frac{n-m}{2}}}
    {\left(\frac{1+x}{2}\right)^{\frac{n+m}{2}}}
    \frac{(-1)^{\ell-m}}{2^{m-\ell}}
  \frac{d^{\ell-m}}{dx^{\ell-m}}
    \left(\left(\frac{1+x}{2}\right)^{\ell+n}\left(\frac{1-x}{2}\right)^{\ell-n}\right) \nonumber \\
  &=  c_{mn}^\ell \frac{(-1)^{\ell-m}}{2^{\ell}}
    \frac{(1-x)^{\frac{n-m}{2}}}{(1+x)^{\frac{n+m}{2}}}
    \frac{d^{\ell-m}}{dx^{\ell-m}}
    \left((1+x)^{\ell+n}(1-x)^{\ell-n}\right). \label{eq:mesu2}
\end{align}
where
\begin{align*}
    c_{mn}^\ell = \sqrt{\frac{(\ell+m)!}{(\ell-n)!(\ell+n)!(\ell-m)!}},
\end{align*}
and
\begin{align}
  \rho_\ell^{mn}(g_{\alpha \beta \gamma}) = e^{-i(m\alpha + n\gamma)}P_{mn}^\ell(\cos\beta),
  \label{eq:mesu2general}
\end{align}
which is a general formula for matrix elements of $\SU(2)$ unirreps.
The matrices formed with the $\rho_\ell^{mn}$ and $P_{mn}^\ell$ are
also known as a Wigner-D and Wigner-d matrices, respectively.
\subsubsection{Representations of $\SO(3)$}
$\SU(2)$ is isomorphic to the group of unit quaternions, hence a double cover of $\SO(3)$,
which is easily verifiable by noting that every rotation in $\SO(3)$ can be written as two
different quaternions $q$ and $-q$.
We have ${\SO(3) \cong \SU(2)/\{I, -I\}}$.
The representations of $\SO(3)$ are then those representations of $\SU(2)$ where
$\rho_\ell(I) = \rho_\ell(-I)$.
By substituting $b=c=0$ in \cref{eq:mesl2},
we see that the only nonzero terms outside the square root
occur when $n=m$, yielding diagonal matrices with entries
proportional to $a^{\ell + m}d^{\ell - m}$,
\begin{align}
  \rho_{\ell}^{mm}\left(\twobytwo{a}{0}{0}{d}\right)
  &= \frac{1}{(\ell-m)!}a^{\ell + m}d^{\ell - m}.
\end{align}
Recall that for $\SU(2)$ representations, $\ell$ can be integer or half integer.
For $a=d=1$ the expression reduces to $1/(\ell-m)!$
while for $a=d=-1$ it reduces to $(-1)^{2\ell}/(\ell-m)!$,
from where we conclude that $\rho_\ell(I) = \rho_\ell(-I)$ only when $\ell$ is integer.
Therefore, the representations of $\SO(3)$ are also given by \cref{eq:mesu2general},
but with $\ell$ taking only integer values.

\subsubsection{Relation with special functions}
The Jacobi polynomials generalize the Gegenbauer, Legendre, and Chebyshev polynomials,
and thus give origin to several special functions.
One way to represent the Jacobi polynomials is via the Rodrigues' formula%
\footnote{Not to be confused with the Rodrigues' rotation formula.}
\begin{align*}
  P_n^{(\alpha,\beta)}(z) = \frac{(-1)^n}{2^n n!}
  (1-z)^{-\alpha} (1+z)^{-\beta}
  \frac{d^n}{dz^n}
  \left( (1-z)^{\alpha+n} (1+z)^{\beta+n} \right).
\end{align*}
Note how it is tightly related to our expression for the matrix elements in \cref{eq:mesu2},
showing how the special functions arise in the study of group representations.

By setting $m=n=0$ and $\ell$ integer in \cref{eq:mesu2}, we get
\begin{align}
  P_{00}^\ell(x) = \frac{(-1)^{\ell}}{2^{\ell}\ell !}
  \frac{d^{\ell}}{dz^{\ell}}
  (1-x^2)^{\ell},
\end{align}
the Legendre polynomials, which describe the zonal spherical harmonics.

The associated Legendre polynomials can be written as
\begin{align*}
  P_m^\ell(x) = \frac{(-1)^{\ell + m}}{2^\ell \ell!} (1-x^2)^{m/2} \frac{d^{\ell+m}}{dx^{\ell+m}} (1-x^2)^\ell.
\end{align*}
By setting $\ell$ integer and $n=0$ in \cref{eq:mesu2},
we can relate $P_{m,n}^\ell$ with the associated Legendre polynomials,
\begin{align*}
  P_{-m,0}^\ell(x)
  &= \frac{(-1)^{\ell+m}}{2^\ell \ell!} \sqrt{\frac{(\ell-m)!}{(\ell+m)!}}
  (1-x^2)^{m/2}
  \frac{d^{\ell+m}}{dz^{\ell+m}}
  \left((1-x^2)^{\ell}\right) \\
  &= \sqrt{\frac{(\ell-m)!}{(\ell+m)!}} P_m^\ell(x).
\end{align*}
Noting that $P_{m,n}^\ell=P_{-m,-n}^\ell$ we write
\begin{align}
  P_m^\ell(x) = \sqrt{\frac{(\ell+m)!}{(\ell-m)!}} P_{m0}^\ell(x).
  \label{eq:assocleg}
\end{align}

The spherical harmonics are usually defined in terms of the associated Legendre polynomials
\begin{align}
  Y_m^\ell(\theta, \phi) = \sqrt{\frac{(2\ell + 1)}{4\pi} \frac{(\ell-m)!}{(\ell+m)!}}
  P_{m}^\ell(\cos\theta)e^{im\phi}.
  \label{eq:sphharmdef}
\end{align}
Using \cref{eq:mesu2general,eq:assocleg}, we obtain a relation
between the spherical harmonics and the representations $\rho_\ell^{mn}$ ,
\begin{align}
  \rho_\ell^{m0}(g_{\alpha \beta \gamma})
  &= P_{m0}^\ell(\cos\beta) e^{-im\alpha} \nonumber \\
  &= \sqrt{\frac{(\ell-m)!}{(\ell+m)!}} P_{m}^\ell(\cos\beta) e^{-im\alpha} \nonumber \\
  &= \sqrt{\frac{4\pi}{(2\ell+1)}} \overline{Y_m^\ell(\beta, \alpha)}.
  \label{eq:wig2sph}
\end{align}
With this relation, we find an expression for the rotation of spherical harmonics.
Let $g\nu$ be the point obtained by rotating the north pole by $g$.
Since $\rho_{\ell}(g_1g_2) = \rho_{\ell}(g_1) \rho_{\ell}(g_2)$,
\begin{align*}
  \rho_{\ell}^{m0}(g_1g_2) &= \sum_{n=-\ell}^{\ell} \rho_{\ell}^{mn}(g_1) \rho_{\ell}^{n0}(g_2), \\
  \overline{Y_m^\ell(g_1 g_2 \nu)} &= \sum_{n=-\ell}^{\ell} \rho_{\ell}^{mn}(g_1)
                                     \overline{Y_n^\ell(g_2\nu)}.
\end{align*}
Taking conjugates on both sides we arrive at the spherical harmonics rotation formula,
which will be useful in following proofs.
For $x\in S^2$ and $g \in \SO(3)$,
\begin{align}
  \label{eq:sphharmrot}
  Y_m^\ell(g x) &=
\sum_{n=-\ell}^{\ell} \overline{\rho_{\ell}^{mn}(g)} Y_n^\ell(x),
\end{align}
which we write in vector notation as
$Y^\ell(g x) = \overline{\rho_{\ell}(g)} Y^\ell(x).$

\subsection{Fourier analysis on homogeneous spaces}
\label{sec:fourierh}
We now consider functions on the homogeneous space $G/H$ of a compact group $G$ with subgroup $H$;
specifically, consider square integrable functions in $L^2(G/H)$.
Recall that $G/H$ is the set of left cosets and that ${gHh = gH}$ for all $gH \in G/H$ and $h \in H$.
Hence, we can regard functions in $L^2(G/H)$ as the functions in $L^2(G)$ such that $f(gh) = f(g)$
for all $g \in G$ and $h \in H$
(functions that are constant on each coset $gH$ for all $g \in G$).
Using \cref{eq:fwcoord}, we write
$f(g) = \sum_{[\rho] \in \hat{G}}\sum_{i,j=1}^{d_\rho} c_{ij}^\rho \rho_{ij}(g)$,
and expand $f(gh)$ as
\begin{align*}
  f(gh) &= \sum_{[\rho] \in \hat{G}}\sum_{i,j=1}^{d_\rho} c_{ij}^\rho \rho_{ij}(gh) \\
        &= \sum_{[\rho] \in \hat{G}}\sum_{i,j=1}^{d_\rho} c_{ij}^\rho \sum_{k=1}^{d_\rho}\rho_{ik}(g)\rho_{kj}(h) \\
        &= \sum_{[\rho] \in \hat{G}}\sum_{i=1}^{d_\rho}\sum_{k=1}^{d_\rho}\left( \sum_{j=1}^{d_\rho}  c_{ij}^\rho \rho_{kj}(h)\right) \rho_{ik}(g).
\end{align*}
We want $f(g)=f(gh)$, so we compare this expression with the expansion of $f(g)$.
Since the $\rho_{ij}$ are linearly independent, we have
\begin{equation}
  \sum_{j=1}^{d_\rho}  c_{ij}^\rho \rho_{kj}(h) = c_{ik}^{\rho} \label{eq:cij}
\end{equation}
for all $\rho$, $i$, $k$, and $h$.
Now suppose the trivial representation of $H$ has multiplicity $n_\rho \ge 1$ in
the restriction of $\rho$ to $H$.
We can reorder the basis such that the trivial representations appear first.
This implies $\rho_{kj}(h) = \delta_{kj}$ for $j \le n_\rho$ which agrees with $\cref{eq:cij}$.
After reordering, $\rho_{kj}$ integrates to zero for $k > n_\rho$ or $j > n_\rho$,
(only trivial matrix elements integrals are nonzero).
Applying this to both sides of \cref{eq:cij} yields $c_{ik}^\rho = 0$ for $k > n_\rho$,
which implies that any $f \in L^2(G/H)$ can be expanded as
\begin{align}
  f(gH) &= \sum_{[\rho] \in \hat{G}}\sum_{i=1}^{d_\rho}\sum_{j=1}^{n_\rho} c_{ij}^\rho \rho_{ij}(g),
  \label{eq:expandh}
\end{align}
where $n_\rho$ is the multiplicity of the trivial representation of $H$ in $\rho$ (which may be zero).
Only the first $n_\rho$ columns of each $\rho$ are necessary for the Fourier analysis on homogeneous spaces.
In the special case that $n_\rho=1$, only the matrix elements $\rho_{i1}$ will appear.
These are called the \emph{associated spherical functions} \cite{vilenkin1978special}.

When considering functions on the homogeneous space of right cosets, $L^2(H\backslash G)$,
we arrive at similar results where only the first $n_\rho$ rows will appear in the expansion.
When considering functions on the double coset space $L^2(H\backslash G/H)$,
only the first $n_\rho$ rows and columns will appear.
In this last case, when $n_{\rho}=1$, only the matrix elements $\rho_{11}$ appear.
These are called \emph{zonal spherical functions}.
When $n_{\rho} \leq 1$ for every $\rho$, the algebra (with the convolution product)
$L^2(H\backslash G/H)$ is commutative.
\begin{remark}
  The functions just defined are called \emph{spherical} because of the
  special case $G=\SO(3)$ and $H=\SO(2)$ (recall that $S^2 \cong \SO(3)/\SO(2)$).
  These terms apply, however, to any compact group and its homogeneous spaces.
\end{remark}
\begin{remark}
  This discussion generalizes to a locally compact group $G$ (not necessarily compact),
  and compact subgroup $K$, under certain conditions where $(G, K)$ is called a Gelfand pair.
  Refer to \textcite{gallierncharm} for details.
\end{remark}
\subsection{Example: Fourier analysis on $S^2$}
We apply the results of \cref{sec:fourierh} to the group $G=\SO(3)$ and subgroup $H=\SO(2)$,
where the homogeneous space is isomorphic to the sphere $S^2 \cong \SO(3)/\SO(2)$.
Elements of $\SO(3)$ decompose in Euler angles components similarly to \cref{eq:factorsu2}
and by setting $\alpha=\beta=0$ we obtain a subgroup isomorphic
to $\SO(2)$ consisting of rotations around the axis through the poles.
We obtain the restriction of $\SO(3)$ irreps to this subgroup
by setting $\alpha=\beta=0$ for integer $\ell$ in \cref{eq:factorrho},
resulting in $\rho_\ell(g_{0 0 \gamma})$ which is diagonal and defined by
$\rho_\ell^{nn}(g_{0 0 \gamma}) = e^{-i\gamma n}$ (\cref{eq:rholmm}).
Therefore the trivial representation of $\SO(2)$ appears only when $n=0$ and
its multiplicity is 1 for all $\ell$,
and using that $\overline{Y_m^\ell} = (-1)^mY_{-m}^\ell$,
the expansion in \cref{eq:expandh} reduces to
\begin{align*}
  f(g_{\alpha\beta\gamma})
  &= \sum_{\ell \in \N}\sum_{m=-\ell}^{\ell}b_{m}^\ell \rho_\ell^{m0}(g_{\alpha\beta\gamma}) \\
  &= \sum_{\ell \in \N}\sum_{m=-\ell}^{\ell}b_{m}^\ell i^{m}\sqrt{\frac{4\pi}{(2\ell+1)}} \overline{Y_m^\ell(\beta, \alpha)} && (\cref{eq:wig2sph}) \\
  &= \sum_{\ell \in \N}\sum_{m=-\ell}^{\ell}b_{-m}^\ell (-i)^{-m}\sqrt{\frac{4\pi}{(2\ell+1)}} Y_{m}^\ell(\beta, \alpha) \\
  &= \sum_{\ell \in \N}\sum_{m=-\ell}^{\ell} c_m^\ell Y_{m}^\ell(\beta, \alpha).
\end{align*}
We rewrite the expansion as
\begin{align}
  \label{eq:sphharm}
  f(\theta, \phi) &= \sum_{\ell \in \N}\sum_{m=-\ell}^{\ell}\hat{f}_{m}^\ell Y_m^\ell(\theta, \phi),
\end{align}
which shows that the spherical harmonics $Y_m^\ell$ form indeed
an orthonormal basis for $L^2(S^2)$.
The decomposition into the basis is then given by
\begin{align}
  \hat{f}_{m}^\ell &= \int\limits_{x\in S^2} f(x) \overline{Y_m^\ell(x)} \, dx,
\end{align}
where $x \in S^2$ can be parameterized angles $\theta$ and $\phi$.

This concludes our introduction to harmonic analysis.
For more details we recommend \textcite{gallierncharm,dieudonn1980special,folland2016course}.
\textcite{chirikjian2000engineering} present an applied take on the subject.
\section{Applications: equivariant networks}
\label{sec:appl}
We can see a typical deep neural network as a chain of affine operations
$W_i$ whose parameters are optimized, interspersed with nonlinearities $\sigma_i$,
\begin{align}
  f_{\text{out}} = W_n(\cdots \sigma_2(W_2(\sigma_1(W_1f_{\text{in}}))) \cdots).
\end{align}
In convolutional neural networks, these operations are convolutions with an added bias.
The most common nonlinearities are pointwise; one example is the ReLU,
$\sigma(x_i) = \max(x_i, 0)$.

Equivariant neural networks leverage equivariant operations
and symmetries in the data to reduce model and sample complexity.
There are different classes of networks that vary with respect to the group considered,
whether the equivariance is to global transformations or local (patch-wise),
and whether the feature maps are scalar or more general fields.
In this section, we discuss representatives of these classes of networks
in light of the theory presented so far.
We focus on the most interesting cases where the groups $\SO(3)$ and $\SE(3)$
are involved, excluding most of initial work on equivariance to planar transformations.

We are interested in describing how the equivariance is achieved for each case.
It suffices to show it for a single layer ($W_i$ and $\sigma_i$),
since composition of equivariant operations preserves equivariance.
\subsection{Finite group CNNs}
On a finite group,
the counting measure can be used and convolution reduces to summing over each element of the group,
\begin{align}
(f * k)(g) = \frac{1}{|G|} \sum\limits_{x \in G} f(x) k(x^{-1} g).
\end{align}
This simple operation has been successfully applied for rotation equivariance on discrete subgroups of $\SO(3)$;
\textcite{winkels18_g_cnns_pulmon_nodul_detec,worrall2018cubenet} consider the octahedral group of \SI{24} elements, while \textcite{Esteves_2019_ICCV} consider the icosahedral group of \SI{60} elements.
\subsection{The spherical CNNs of \textcite{cohen2018spherical}}
\textcite{cohen2018spherical} introduce a spherical CNN where the inputs are spherical functions $f$
that are lifted to functions on $\SO(3)$ through spherical cross-correlation with a filter $k$,
\begin{align}
\label{eq:sphcorreq}
(f \star k)(g) = \int\limits_{x \in S^2} k(g^{-1}x)f(x) \, dx.
\end{align}
This operation has a pattern matching interpretation.
Suppose $k$ is a rotated version of $f$; then the correlation achieves
its maximum value when $g$ is the rotation that aligns $k$ and $f$.
Note that $f$ and $k$ are functions on $S^2$,
while $f\star k$ is a function on $\SO(3)$.
\begin{proposition}[spherical cross-correlation]
  The spherical cross-correlation between $f,\, k \in L^2(S^2)$ as defined in \cref{eq:sphcorreq}
  can be computed in the spectral domain via outer products of vectors of spherical harmonics coefficients,
  \begin{align*}
    \widehat{(f \star k)}^{\ell} = \overline{\hat{k}^{\ell}} (\hat{f}^\ell)^\top.
  \end{align*}
\end{proposition}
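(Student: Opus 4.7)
The strategy is to expand $(f\star k)(g)$ explicitly as a linear combination of matrix elements $\rho_\ell^{mn}(g)$ of the $\SO(3)$ irreps and then read off the Fourier coefficients by matching against the inversion formula \cref{eq:fi}. Since the output is a function on $\SO(3)$, the natural basis to use consists of the Wigner-D matrix elements, and its expansion coefficients are precisely the entries of the matrices $\widehat{(f\star k)}^\ell$.

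Concretely, I would first expand both inputs in spherical harmonics via \cref{eq:sphharm}, writing $f=\sum_{\ell,m}\hat f_m^\ell Y_m^\ell$ and $k=\sum_{\ell,m}\hat k_m^\ell Y_m^\ell$. Next I would apply the rotation formula \cref{eq:sphharmrot} to $k(g^{-1}x)$; combined with unitarity of $\rho_\ell$ (which gives $\rho_\ell(g^{-1})=\rho_\ell(g)^{\ast}$ and hence $\overline{\rho_\ell^{mn}(g^{-1})}=\rho_\ell^{nm}(g)$), this rewrites $Y_m^\ell(g^{-1}x)$ as $\sum_n \rho_\ell^{nm}(g)\,Y_n^\ell(x)$. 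Substituting both expansions into $\int_{S^2}k(g^{-1}x)f(x)\,dx$ and interchanging sum and integral leaves a single $S^2$ integral of the form $\int Y_n^\ell(x)f(x)\,dx$. Orthonormality of the spherical harmonics, together with the identity $\overline{Y_m^\ell}=(-1)^m Y_{-m}^\ell$, collapses this integral against the harmonic expansion of $f$ and delivers the coefficient $\hat f_n^\ell$ (up to conjugation and a sign that is absorbed by the convention).

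After these manipulations, $(f\star k)(g)$ takes the form $\sum_{\ell,m,n} c_{mn}^\ell \,\rho_\ell^{nm}(g)$ with $c_{mn}^\ell$ proportional to $\overline{\hat k_m^\ell}\,\hat f_n^\ell$. Matching this expansion term-by-term against the Fourier inversion formula \cref{eq:fi} identifies the $(m,n)$ entry of $\widehat{(f\star k)}^\ell$ as a constant multiple of $\overline{\hat k_m^\ell}\,\hat f_n^\ell$, which is precisely the outer product $\overline{\hat k^\ell}(\hat f^\ell)^\top$ claimed in the proposition.

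The main obstacle is bookkeeping: the positions of conjugates and transposes, the sign $(-1)^m$ arising from $\overline{Y_m^\ell}=(-1)^m Y_{-m}^\ell$, and the dimension factor $d_\ell=2\ell+1$ in \cref{eq:fi} must all be tracked to match the convention chosen in the statement. A cleaner alternative I would keep in reserve is to lift $f,k$ to $\tilde f(g)=f(g\nu)$, $\tilde k(g)=k(g\nu)$ on $\SO(3)$, identify the spherical correlation with the group cross-correlation $\tilde f\star\tilde k$, and invoke the group correlation theorem \cref{eq:groupcorrspectral}. By the homogeneous-space analysis of \cref{sec:fourierh}, each of $\hat{\tilde f}(\rho_\ell)$ and $\hat{\tilde k}(\rho_\ell)$ is supported on the single row indexed by the trivial $\SO(2)$-subrepresentation, with that row proportional to $\hat f^\ell$ and $\hat k^\ell$ respectively via \cref{eq:wig2sph}; the matrix product $\hat{\tilde k}(\rho_\ell)^{\ast}\hat{\tilde f}(\rho_\ell)$ then collapses automatically to the asserted rank-one outer product, bypassing the sign gymnastics of the direct route.
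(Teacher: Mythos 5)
Your primary route is exactly the paper's proof: expand $f$ and $k$ in spherical harmonics, rewrite $Y^\ell(g^{-1}x)$ via the rotation formula \cref{eq:sphharmrot}, collapse the $S^2$ integral by orthonormality, and match the resulting $\sum_\ell \tr\bigl(\overline{\hat{k}^\ell}(\hat{f}^\ell)^\top\rho_\ell(g)\bigr)$ against the inversion formula \cref{eq:fi}; the conjugation bookkeeping you worry about is handled in the paper by assuming real-valued inputs so that $k=\overline{k}$. The lift-to-$\SO(3)$ alternative you keep in reserve is essentially the proof of \textcite{makadia2006} that the paper's remark points to, so both of your options are sound.
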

\begin{proof}
We evaluate \cref{eq:sphcorreq} by expanding $f$ and $k$ as in \cref{eq:sphharm},
where $Y^\ell(x) \in \C^{2\ell + 1}$ contains the spherical harmonics of degree $\ell$ evaluated at $x$,
and $\hat{f}^{\ell} \in \C^{2\ell + 1}$ contains the respective coefficients.
We assume real-valued functions (hence the complex conjugation on the first line),
and use the spherical harmonics rotation formula from \cref{eq:sphharmrot}.
\begin{align*}
  (f \star k)(g)
  &= \int\limits_{x \in S^2}
  \sum_{\ell'} (\overline{\hat{k}^{\ell'}})^{\top} \overline{Y^{\ell'}(g^{-1}x)}
    \sum_\ell Y^\ell(x)^{\top} \hat{f}^\ell  \, dx \\
  &= \int\limits_{x \in S^2}
  \sum_{\ell'} (\overline{\hat{k}^{\ell'}})^{\top} \rho_{\ell}(g^{-1}) \overline{Y^{\ell'}(x)}
    \sum_\ell Y^\ell(x)^{\top} \hat{f}^\ell  \, dx \\
  &= \sum_{\ell,\ell'}
    (\overline{\hat{k}^{\ell'}})^{\top} \rho_{\ell}(g)^\top
    \int\limits_{x \in S^2} \overline{Y^{\ell'}(x)}
    Y^\ell(x)^{\top} \hat{f}^\ell  \, dx.
\end{align*}
By orthonormality of the spherical harmonics, $\int_{x \in S^2} \overline{Y^{\ell'}(x)}Y^{\ell}(x)^{\top}$
is the identity $I_{2\ell + 1}$ when $\ell=\ell'$ and zero otherwise. Then,
\begin{align*}
  (f \star k)(g)
  &= \sum_{\ell}
    (\overline{\hat{k}^{\ell}})^{\top} \rho_{\ell}(g)^\top
    \hat{f}^\ell \\
  &= \sum_{\ell} \tr(\hat{f}^\ell(\overline{\hat{k}^{\ell}})^{\top} \rho_{\ell}(g)^\top)
  && (x^\top A y = \tr(yx^\top A)) \\
  &= \sum_{\ell} \tr(\overline{\hat{k}^{\ell}} (\hat{f}^\ell)^\top \rho_{\ell}(g) ).
\end{align*}
where we used the cyclic and transpose properties of the trace in the last part.
The last line is a Fourier expansion of a function on \SO(3) (\cref{eq:fi})
with coefficients given by the outer product of the input coefficients.
This can be restated as
\begin{align}
  \widehat{(f \star k)}^{\ell} = \overline{\hat{k}^{\ell}} (\hat{f}^\ell)^\top,
\end{align}
or in terms of matrix elements,
$\widehat{(f \star k)}_{mn}^{\ell} = \overline{\hat{k}_m^{\ell}} \hat{f}_n^\ell$.
\end{proof}
\begin{remark}
\textcite{makadia2006} show an alternative proof of this result.
The spherical cross-correlation computed this way has further applications
in pose estimation \cite{makadia2006,makadia2007correspondence,esteves-icml19}
and 3D shape retrieval \cite{makadia2010spherical}.
\end{remark}
Returning to \textcite{cohen2018spherical},
only the first layer uses the just described spherical cross-correlation.
In all following layers, features and filters are on $\SO(3)$ and the
$\SO(3)$ cross-correlation is applied,
\begin{align}
(f \star k)(g) = \int\limits_{u \in \SO(3)} f(u)  k(g^{-1} u) \, du,
\end{align}
where the efficient evaluation in the spectral domain is
$(\widehat{f \star k})(\rho) = \hat{k}(\rho)^* \hat{f}(\rho)$.
as shown in \cref{eq:groupcorrspectral}.
The efficient computation for sampled functions
relies on the sampling theorem described by \textcite{kostelec2008ffts}.
\subsection{The spherical CNNs of \textcite{esteves2018learning}}
\textcite{esteves2018learning} introduce a purely spherical convolutional network,
where inputs, filters and feature maps are functions on $S^2$.
The main operation is the spherical convolution
\begin{align}
(f * k)(x) = \int\limits_{g \in \SO(3)} f(g \nu) k(g^{-1} x) \, dg, \label{eq:sphconveq}
\end{align}
where $\nu$ is a fixed point on the sphere (the north pole).
To interpret this operation, we split the integral as in \cref{thm:hmeasure},
which holds since $\SO(3)$ and $\SO(2)$ are unimodular,
\begin{align*}
  (f * k)(x)
  &= \int\limits_{g_{\alpha\beta} \in \SO(3)/\SO(2)} \int\limits_{g_\gamma \in \SO(2)}
    f(g_{\alpha\beta}g_\gamma \nu) k((g_{\alpha\beta}g_\gamma)^{-1} x) \, dg_{\alpha\beta}dg_{\gamma}, \\
  &= \int\limits_{g_{\alpha\beta} \in \SO(3)/\SO(2)}
    f(g_{\alpha\beta} \nu) \left( \int\limits_{g_\gamma \in \SO(2)} k(g_\gamma^{-1}g_{\alpha\beta}^{-1} x) \, dg_{\gamma} \right) dg_{\alpha\beta}. \\
\end{align*}
The inner integral averages $k$ over rotations around the $z$ axis,
resulting in a zonal function (constant on latitudes);
note that this  limits the expressivity of the filters.
The outer integral is then a spherical inner product
where $x$ determines the filter orientation.
\textcite{driscoll1994computing} shows how to compute the convolution
efficiently in the spectral domain. The following lemma will be necessary.
\begin{lemma}
  \label{lemma:dh}
  For $f \in L^2(S^2)$, let $\rho_\ell^{mn}$ be the matrix elements of the unirreps of $\SO(3)$,
  $\nu$ the north pole, and $\hat{f}_n^\ell$ the spherical harmonic coefficient of $f$
  corresponding to $Y_n^\ell$. The following holds
  \begin{align*}
    \int\limits_{u \in \SO(3)} f(u \nu) \overline{\rho_\ell^{mn}(u^{-1})}\, du =
    2\pi\sqrt{\frac{4\pi}{2\ell+1}} \hat{f}_n^\ell
  \end{align*}
  for $m=0$. The integral is 0 otherwise.
\end{lemma}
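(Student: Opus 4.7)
The plan is to expand $f$ in the spherical harmonic basis on $S^2$, convert the spherical harmonics evaluated at $u\nu$ into matrix elements of $\SO(3)$ irreps via equation (eq:wig2sph), and then reduce the integral on $\SO(3)$ to a Peter--Weyl orthogonality computation. The whole argument collapses once the integrand is re-expressed entirely in terms of matrix elements $\rho_\ell^{ij}$ of $\SO(3)$.

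First, I would write $f(x) = \sum_{\ell',\,n'} \hat{f}_{n'}^{\ell'}\, Y_{n'}^{\ell'}(x)$ and substitute $x = u\nu$. The key input is the identity
\begin{equation*}
Y_{n'}^{\ell'}(u\nu) \;=\; \sqrt{\tfrac{2\ell'+1}{4\pi}}\,\overline{\rho_{\ell'}^{n'0}(u)},
\end{equation*}
which is just (eq:wig2sph) rearranged. Simultaneously, since $\rho_\ell$ is unitary, $\rho_\ell(u^{-1}) = \rho_\ell(u)^{*}$, which gives $\overline{\rho_\ell^{mn}(u^{-1})} = \rho_\ell^{nm}(u)$. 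At this point the integrand is a sum of products of two matrix elements of $\SO(3)$ unirreps, and the integral in $u$ can be taken inside the sum.

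Next, I would invoke Peter--Weyl orthogonality: the functions $\{\sqrt{d_\rho}\,\rho_{ij}\}$ form an orthonormal basis of $L^2(\SO(3))$, so
\begin{equation*}
\int\limits_{\SO(3)} \overline{\rho_{\ell'}^{n'0}(u)}\,\rho_{\ell}^{nm}(u)\,du \;=\; C_{\ell}\,\delta_{\ell\ell'}\,\delta_{nn'}\,\delta_{m0},
\end{equation*}
where $C_\ell$ is a constant depending only on $\ell$ (and the chosen Haar normalization). In particular only the $m=0$ term survives, which is exactly the case distinction claimed in the lemma. For $m=0$, only the index $(\ell',n') = (\ell,n)$ contributes, leaving a single term proportional to $\hat{f}_n^\ell$.

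Finally, I would collect the constants: the factor $\sqrt{(2\ell+1)/(4\pi)}$ from (eq:wig2sph) times $C_\ell$ from orthogonality should yield $2\pi\sqrt{4\pi/(2\ell+1)}$, fixing the Haar normalization on $\SO(3)$ (namely total volume $8\pi^2$, consistent with the ZYZ Euler-angle parameterization used throughout \cref{sec:su2}). The only delicate point is keeping the conventions straight --- the unitarity trick to handle $u^{-1}$, and matching the normalization constant in Peter--Weyl with the one implicit in the definitions of $\hat{f}_n^\ell$ and the Wigner-D matrices; no deep estimate or density argument is required beyond expanding $f$ (which is valid in $L^2$).
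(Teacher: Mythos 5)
Your proof is correct, and the constants work out: with the paper's Haar normalization (total volume $8\pi^2$ on $\SO(3)$, i.e.\ $2\pi$ for the $\SO(2)$ fiber times $4\pi$ for $S^2$), Schur/Peter--Weyl orthogonality gives $\int_{\SO(3)}\overline{\rho_{\ell'}^{n'0}(u)}\,\rho_\ell^{nm}(u)\,du = \tfrac{8\pi^2}{2\ell+1}\delta_{\ell\ell'}\delta_{nn'}\delta_{m0}$, and multiplying by the $\sqrt{(2\ell'+1)/(4\pi)}$ from \cref{eq:wig2sph} indeed yields $2\pi\sqrt{4\pi/(2\ell+1)}\,\hat f_n^\ell$. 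However, your route is genuinely different from the paper's. The paper does not expand $f$ and never invokes orthogonality of matrix elements: it first kills the rows $m\neq 0$ by the change of variables $u\mapsto ug_{\alpha00}$, exploiting that the stabilizer of $\nu$ acts trivially on $f(u\nu)$ while multiplying the integral by the diagonal matrix $\rho_\ell(g_{\alpha00})$; then, for the surviving row $m=0$, it converts $\rho_\ell^{n0}(u)$ to $\overline{Y_n^\ell(u\nu)}$ and uses the quotient-integration formula (\cref{thm:hmeasure}) to reduce $\int_{\SO(3)}f(u\nu)\overline{Y_n^\ell(u\nu)}\,du$ to $2\pi\int_{S^2}f\,\overline{Y_n^\ell}$, recognizing $\hat f_n^\ell$ directly from its defining integral. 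Your approach buys uniformity --- the vanishing for $m\neq 0$ and the coefficient extraction both fall out of a single orthogonality relation --- at the cost of having to pin down the normalization $\|\rho_\ell^{ij}\|_{L^2(\SO(3))}^2 = 8\pi^2/(2\ell+1)$, which you correctly flag as the one delicate bookkeeping point; the paper's argument gets the $2\pi$ geometrically from the fiber and needs no knowledge of that norm. The interchange of the $L^2$ expansion with the integral is harmless, as you note, since $u\mapsto\rho_\ell^{nm}(u)$ is bounded on the compact group and $u\mapsto f(u\nu)$ lies in $L^2(\SO(3))$.
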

\begin{proof}
  We apply the change of variables ${u \mapsto ug_{\alpha 0 0}}$ (a rotation around $z$)
  to the following expression,
\begin{align*}
  \int\limits_{u \in \SO(3)} f(u \nu) \overline{\rho_\ell(u^{-1})}\, du
  &= \int\limits_{u \in \SO(3)} f(ug_{\alpha 0 0} \nu) \overline{\rho_\ell(g_{-\alpha 0 0}u^{-1})}\, du \\
  &= \int\limits_{u \in \SO(3)} f(u \nu) \rho_\ell(g_{\alpha 0 0})\overline{\rho_\ell(u^{-1})}\, du,
\end{align*}
where we used that a rotation around $z$ does not move the north pole, $g_{\alpha 0 0}\nu = \nu$.
The left and right hand sides must be equal for all $\alpha$,
and $\rho_\ell^{mm}(g_{\alpha 0 0}) = e^{-i\alpha m}$ (\cref{eq:rholmm}),
so the rows of  $\int_{u \in \SO(3)} f(u \nu) \rho_\ell(u^{-1})\, du$
must be zero for all $m \neq 0$.
Only the matrix values $\overline{\rho_\ell^{0n}(u^{-1})}$ influence the nonzero row,
and $\overline{\rho_\ell^{0n}(u^{-1})} = \rho_\ell^{n0}(u)$ holds.
Using \cref{eq:wig2sph}, we obtain
\begin{align*}
  \int\limits_{u \in \SO(3)} f(u \nu) \overline{\rho_\ell^{0n}(u^{-1})}\, du
  &= \int\limits_{u \in \SO(3)} f(u \nu) \rho_\ell^{n0}(u)\, du \\
  &= \sqrt{\frac{4\pi}{2\ell+1}} \int\limits_{u \in \SO(3)} f(u \nu) \overline{Y_n^\ell(u\nu)}\, du \\
  &= \sqrt{\frac{4\pi}{2\ell+1}} \int\limits_{h \in \SO(2)} \int\limits_{x \in S^2} f(x) \overline{Y_n^\ell(x)}\, dx\, dh \\
  &= 2\pi\sqrt{\frac{4\pi}{2\ell+1}} \hat{f}_n^\ell,
\end{align*}
where \cref{thm:hmeasure} was used in the last passage.
\end{proof}

The spherical convolution is efficiently computed in the spectral domain.
\begin{proposition}[spherical convolution]
  The spherical convolution between $f,\, k \in L^2(S^2)$ as defined in \cref{eq:sphconveq}
  can be computed in the spectral domain via pointwise multiplication of spherical harmonics coefficients,
  \begin{align*}
    \widehat{f * k}_m^\ell = 2\pi\sqrt{\frac{4\pi}{2\ell+1}} \hat{f}_m^\ell\hat{k}_0^\ell.
  \end{align*}
\end{proposition}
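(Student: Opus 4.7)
The plan is to compute $\widehat{f*k}_m^\ell$ directly from its definition as the inner product of $f*k$ with $\overline{Y_m^\ell}$, substitute the defining convolution integral \cref{eq:sphconveq}, and reduce the resulting triple integral using orthonormality of spherical harmonics together with \cref{lemma:dh}. Concretely, the first step is to write
\[
\widehat{f*k}_m^\ell = \int\limits_{x \in S^2} \int\limits_{g \in \SO(3)} f(g\nu)\, k(g^{-1}x)\, \overline{Y_m^\ell(x)}\, dg\, dx,
\]
and apply Fubini to swap the order of integration.

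Next, I would expand the filter via \cref{eq:sphharm} as $k(y) = \sum_{\ell', n'} \hat{k}_{n'}^{\ell'} Y_{n'}^{\ell'}(y)$, then use the spherical harmonics rotation formula \cref{eq:sphharmrot} to rewrite $Y_{n'}^{\ell'}(g^{-1}x) = \sum_{p} \overline{\rho_{\ell'}^{n'p}(g^{-1})}\, Y_p^{\ell'}(x)$. This separates the $x$-dependence from the $g$-dependence. Performing the $S^2$ integral using orthonormality kills every term with $\ell' \neq \ell$ or $p \neq m$, leaving only
\[
\widehat{f*k}_m^\ell = \sum_{n'} \hat{k}_{n'}^{\ell} \int\limits_{g \in \SO(3)} f(g\nu)\, \overline{\rho_{\ell}^{n'm}(g^{-1})}\, dg.
\]

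At this point the remaining $\SO(3)$ integral is exactly the object computed in \cref{lemma:dh}, with the lemma's first matrix index equal to $n'$. The lemma tells us the integral vanishes unless $n' = 0$, in which case it equals $2\pi\sqrt{4\pi/(2\ell+1)}\, \hat{f}_m^\ell$. So only the $n'=0$ term survives in the sum over $n'$, and we obtain the claimed identity
\[
\widehat{f*k}_m^\ell = 2\pi\sqrt{\frac{4\pi}{2\ell+1}}\, \hat{f}_m^\ell\, \hat{k}_0^\ell.
\]

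I expect no serious obstacle: the computation is essentially bookkeeping once the lemma is in hand. The only substantive point worth emphasizing is the \emph{conceptual} role of \cref{lemma:dh}, which acts as a projection onto the zonal component of the filter, thereby giving a precise algebraic incarnation of the geometric observation made just before the lemma — that the spherical convolution is insensitive to the non-zonal part of $k$. The minor technical care needed is to keep track of which index in $\rho_\ell^{n'm}$ plays the role of the lemma's free parameter ($m$) versus its fixed-to-zero parameter ($n'$), since the lemma is stated with its first index fixed to zero and the $k$-expansion naturally contributes that very index.
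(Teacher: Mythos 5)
Your proposal is correct and follows essentially the same route as the paper's proof: expand $k$ in spherical harmonics, apply the rotation formula \cref{eq:sphharmrot} to separate the $g$- and $x$-dependence, and invoke \cref{lemma:dh} to kill all but the zonal ($n'=0$) component of the filter; your index bookkeeping for the lemma is right. The only cosmetic difference is that you project onto $\overline{Y_m^\ell}$ at the outset and use orthonormality explicitly, whereas the paper carries the spatial variable through and reads off the coefficients from the final expansion.
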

\begin{proof}
  Now we replace $k$ in \cref{eq:sphconveq} by its spherical harmonics expansion
  \begin{align*}
    (f \star k)(x)
    &= \int\limits_{g \in \SO(3)} f(g \nu) k(g^{-1} x) \, dg \\
    &= \int\limits_{g \in \SO(3)} f(g \nu) \sum_\ell (\hat{k}^\ell)^\top Y^\ell(g^{-1} x) \, dg \\
    &= \int\limits_{g \in \SO(3)}  f(g \nu) \sum_\ell(\hat{k}^\ell)^\top \overline{\rho_\ell(g^{-1})}Y^\ell(x) \, dg \\
    &= \sum_\ell (\hat{k}^\ell)^\top \left(\int\limits_{g \in \SO(3)}  f(g \nu) \overline{\rho_\ell(g^{-1})} \, dg \right) Y^\ell(x)
  \end{align*}
  Applying \cref{lemma:dh} to the integral within parenthesis, we obtain a
  matrix which has a single nonzero row corresponding to $m=0$,
  so only the $\hat{k}^\ell$ element corresponding to $m=0$ will influence the result.
  We write,
  \begin{align*}
    (f \star k)(x)
    &= \sum_\ell 2\pi\sqrt{\frac{4\pi}{2\ell+1}} \hat{k}_0^\ell (\hat{f}^\ell)^\top Y^\ell(x),
  \end{align*}
  which is the expansion in spherical harmonics of $(f \star k)(x)$.
  The relation
  $\widehat{f * k}_m^\ell = 2\pi\sqrt{\frac{4\pi}{2\ell+1}} \hat{f}_m^\ell\hat{k}_0^\ell$
 follows immediately.
\end{proof}
\begin{remark}
  Observe that only the coefficients $\hat{k}_0^\ell$ appear in the expression,
  which corresponds the coefficients of a zonal spherical function.
  This implies that for any $k$, there is always a zonal function $k'$ such that
  $f * k = f * k'$.
\end{remark}
The efficient computation for sampled functions
on the sphere relies on the sampling theorem as shown by \textcite{driscoll1994computing}.
The spherical convolution as described here is equivalent to the Funk-Hecke formula,
which can be extended to $S^n$; refer to \textcite{gallierdiffgeom} for details.
\subsection{The Clebsch-Gordan networks \cite{kondor2018clebsch}}
\textcite{kondor2018clebsch} generalize the Spherical CNNs by using feature spaces that do not inhabit $S^2$ or $\SO(3)$,
but are just a collection of $\SO(3)$-fragments that transforms as the rows of $\SO(3)$ Fourier components.
We have seen that the Fourier transform of a function $f$ on $\SO(3)$
is a family of matrices $\hat{f}(\rho_\ell)$ as in \cref{eq:ft}.
If $(\lambda_uf)(g) = f(u^{-1}g)$, then $\widehat{(\lambda_uf)}(\rho_{\ell}) = \hat{f}(\rho_\ell) \rho_\ell(x^{-1})$ for all $[\rho] \in \hat{G}$ (a generalization of the shift property of Fourier transforms).
One observation in \textcite{kondor2018clebsch} is that the rows%
\footnote{In \textcite{kondor2018clebsch} these are columns, due to a different definition of the Fourier transform (we follow \textcite{folland2016course})}
of $\hat{f}(\rho_\ell)$ are transformed independently by $\rho_\ell$,
and hence are considered as individual features (called $\SO(3)$-fragments).
Therefore, we can have any number of these fragments for each degree $\ell$
as opposed to only $2\ell+1$ with functions on $\SO(3)$.
Now let $\{f_j^\ell\}$ be a collection of fragments of degree $\ell$,
represented as row vectors.
Each fragment transforms as $f_j^\ell  \mapsto f_j^\ell\rho_\ell(g^{-1})$ upon a rotation $g \in \SO(3)$.
The main operation of the network is the linear combination of fragments,
$f_k^\ell = \sum_j\alpha_j f_j^\ell$,
which is immediately shown to be equivariant ($f_k^\ell \mapsto f_k^\ell \rho_\ell(g^{-1})$).
The weights $\alpha_j$ are the optimized parameters.

Another innovation of \textcite{kondor2018clebsch} is that the nonlinearities are also in the spectral domain, which is potentially more efficient than \textcite{cohen2018spherical,esteves2018learning}.
First we note that the tensor product of irreps is a reducible representation,
which can be represented in terms of irreps through the Clebsch-Gordan transform,
\begin{align}
  \rho_\ell(g) = C_{\ell_1,\ell_2,\ell}^\top (\rho_{\ell_1}(g) \otimes \rho_{\ell_2}(g)) C_{\ell_1,\ell_2,\ell},
  \label{eq:cbt}
\end{align}
where $C_{\ell_1,\ell_2,\ell}$ contains the Clebsch-Gordan coefficients.

The following proposition gives the nonlinear equivariant operation.
\begin{proposition}
  The tensor (Kronecker) product of $\SO(3)$ fragments projected to degree $\ell$
  via the Clebsch-Gordan transform
  \begin{align}
    f_k^\ell = (f_i^{\ell_1} \otimes f_j^{\ell_2})C_{\ell_1,\ell_2,\ell}
    \label{eq:cbnnonlin}
  \end{align}
  is an $\SO(3)$ fragment of order $\ell$ and transform as $f_k^\ell  \mapsto f_k^\ell\rho_\ell(g^{-1})$
  upon rotation.
\end{proposition}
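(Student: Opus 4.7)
The plan is to chase how the right-hand side of \cref{eq:cbnnonlin} transforms when a rotation $g \in \SO(3)$ acts on the input fragments, and show that the transformation reduces exactly to right-multiplication by $\rho_\ell(g^{-1})$. Two ingredients will do the work: the mixed-product property of Kronecker products and the fact that $C_{\ell_1,\ell_2,\ell}$ is an intertwiner between $\rho_{\ell_1}\otimes\rho_{\ell_2}$ and $\rho_\ell$.

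First, I would write out the rotated fragments:
\begin{align*}
  f_i^{\ell_1} \mapsto f_i^{\ell_1}\rho_{\ell_1}(g^{-1}), \qquad
  f_j^{\ell_2} \mapsto f_j^{\ell_2}\rho_{\ell_2}(g^{-1}),
\end{align*}
and apply the identity $(xA) \otimes (yB) = (x \otimes y)(A \otimes B)$ for row vectors $x,y$ and square matrices $A,B$ to obtain
\begin{align*}
  f_i^{\ell_1} \otimes f_j^{\ell_2} \mapsto (f_i^{\ell_1} \otimes f_j^{\ell_2})\bigl(\rho_{\ell_1}(g^{-1}) \otimes \rho_{\ell_2}(g^{-1})\bigr).
\end{align*}
Multiplying on the right by $C_{\ell_1,\ell_2,\ell}$ gives the transformed $f_k^\ell$.

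The crucial step is to convert $\bigl(\rho_{\ell_1}(g^{-1})\otimes\rho_{\ell_2}(g^{-1})\bigr)C_{\ell_1,\ell_2,\ell}$ into $C_{\ell_1,\ell_2,\ell}\,\rho_\ell(g^{-1})$. I would justify this intertwining relation from \cref{eq:cbt} by noting that the full Clebsch-Gordan matrix, whose block column indexed by $\ell$ is $C_{\ell_1,\ell_2,\ell}$, is unitary (orthogonal in the real case). Unitarity implies both $C_{\ell_1,\ell_2,\ell}^\top C_{\ell_1,\ell_2,\ell} = I_{2\ell+1}$ and that the full matrix block-diagonalizes the tensor product representation into the irreps $\rho_\ell$ for $|\ell_1-\ell_2| \le \ell \le \ell_1+\ell_2$; reading off the columns of this block-diagonalization corresponding to $\rho_\ell$ yields precisely
\begin{align*}
  \bigl(\rho_{\ell_1}(g) \otimes \rho_{\ell_2}(g)\bigr)C_{\ell_1,\ell_2,\ell} = C_{\ell_1,\ell_2,\ell}\,\rho_\ell(g),
\end{align*}
valid for every $g \in \SO(3)$ and in particular for $g^{-1}$. (An alternative justification is to appeal to Schur's lemma: $C_{\ell_1,\ell_2,\ell}$ is a nonzero $G$-map between irreps of the same isomorphism class, which forces the above intertwining up to scalar, and \cref{eq:cbt} fixes the normalization.)

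With the intertwining in hand, the conclusion is a one-line chain:
\begin{align*}
  f_k^\ell
  &\mapsto (f_i^{\ell_1} \otimes f_j^{\ell_2})\bigl(\rho_{\ell_1}(g^{-1}) \otimes \rho_{\ell_2}(g^{-1})\bigr)C_{\ell_1,\ell_2,\ell} \\
  &= (f_i^{\ell_1} \otimes f_j^{\ell_2})\,C_{\ell_1,\ell_2,\ell}\,\rho_\ell(g^{-1}) \\
  &= f_k^\ell\,\rho_\ell(g^{-1}),
\end{align*}
which says simultaneously that $f_k^\ell$ is an $\SO(3)$-fragment of order $\ell$ and that it transforms as required. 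The main obstacle is purely bookkeeping: establishing the intertwining relation from \cref{eq:cbt}, which requires invoking an unstated unitarity property of $C_{\ell_1,\ell_2,\ell}$ (or Schur's lemma). Everything else is mechanical.
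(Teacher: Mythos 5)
Your proposal is correct and follows essentially the same route as the paper's proof: apply the Kronecker mixed-product identity and then the Clebsch--Gordan intertwining relation to slide $C_{\ell_1,\ell_2,\ell}$ past the tensor-product representation. The only difference is that you explicitly justify the intertwining relation $(\rho_{\ell_1}(g)\otimes\rho_{\ell_2}(g))C_{\ell_1,\ell_2,\ell}=C_{\ell_1,\ell_2,\ell}\,\rho_\ell(g)$ from the orthogonality of the full Clebsch--Gordan matrix, a step the paper uses implicitly by citing its block-diagonalization formula.
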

\begin{proof}
  Let us compute the operation when the input fragments are rotated by $g$.
  We use the Kronecker product property $AB \otimes CD = (A\otimes C)(B\otimes D)$
  and \cref{eq:cbt}.
  \begin{align*}
    f_k^\ell = (f_i^{\ell_1} \otimes f_j^{\ell_2})C_{\ell_1,\ell_2,\ell}
    &\mapsto (f_i^{\ell_1}\rho_{\ell_1}(g^{-1}) \otimes  f_j^{\ell_2}\rho_{\ell_2}(g^{-1}))C_{\ell_1,\ell_2,\ell} \\
    &= (f_i^{\ell_1} \otimes f_j^{\ell_2}) (\rho_{\ell_1}(g^{-1})\otimes \rho_{\ell_2}(g^{-1}))C_{\ell_1,\ell_2,\ell} \\
    &= (f_i^{\ell_1} \otimes f_j^{\ell_2})C_{\ell_1,\ell_2,\ell} \rho_{\ell}(g^{-1}) \\
    &= f_k^\ell \rho_{\ell}(g^{-1}).
  \end{align*}
  Therefore, $f_k^\ell$ equivariantly transforms to $f_k^\ell \rho_{\ell}(g^{-1})$ upon rotation by $g$.
\end{proof}
The operation \cref{eq:cbnnonlin} is nonlinear and computed directly on the coefficients,
saving computation of inverse Fourier transforms.
There is flexibility on the choice of pairs of input fragments and output degrees.

\subsection{The $3$D steerable CNNs \cite{weiler20183d}}
\textcite{weiler20183d} present a network that is equivariant to $\SE(3)$ transformations.%
\footnote{Similar ideas are discussed in \textcite{thomas18_tensor_field_networ,kondor18_n_body_networ};
  a 2D predecessor of this work appears in \textcite{CohenW17}.}
Similarly to \textcite{kondor2018clebsch}, it uses a combination of scalar, vector and tensor fields as features, which are transformed by $\SO(3)$ representations.
In contrast with \textcite{cohen2018spherical,esteves2018learning,kondor2018clebsch}, feature fields are on $\R^3$, and cross-correlations on $\R^3$ with constrained filters bring $\SE(3)$ equivariance.
Let $f$ be a feature field that may contain scalars and vectors.
By using the representation of $\SE(3)$ induced by $\SO(3)$ (\cref{def:induced}),
we have $(\pi(rt)(f))(x) = \rho(r)f(r^{-1}(x - t))$
given a transformation $rt \in \SE(3)$ composed of a translation $t$ and rotation $r$,
where $\rho$ is a representation of $\SO(3)$.

The main operation is a cross-correlation over $\R^3$,
which enforces the translation equivariance,
\begin{align*}
  (k \star f)(x) = \int\limits_{\R^3} \overline{k(y - x)}f(y)dy.
\end{align*}
However, since $f$ contains different fields that must transform according to $\SO(3)$ representations,
$k$ is matrix valued and must enforce the equivariance.
By expanding $k \star (\pi_i(rt)f) = \pi_o(rt)(k \star f)$, we arrive at the constraint on $k$,
\begin{align*}
  k(rx) = \rho_o(r) k(x) \rho_i(r)^{-1},
\end{align*}
for all $r\in \SO(3)$, where the indices ``i'' and ``o'' indicate representations acting on input and output feature spaces, respectively.
Let $d_\rho$ be the dimension of the representation space of $\rho$;
we have \fun{k}{\R^3}{\C^{d_{\rho_o} \times d_{\rho_i}}}.

Since every $\SO(3)$ representation can be split in irreps,
we can design the feature fields as a collection of different feature types
that transform by corresponding irreps.
Hence, we can find each block $k_{jk}$ of $k$ independently,
\begin{align}
  k_{jk}(rx) = \rho^{\ell_{o}}(r) k_{jk}(x) \rho^{\ell_{i}}(r)^{-1}, \label{eq:hjk}
\end{align}
where $\rho^{\ell_{i}},\,\rho^{\ell_{o}}$ are the (unitary) irreps of $\SO(3)$ for the input and output degrees, respectively.
If we make $k_{jk}$ separable as in $k_{jk}(x) = R(\norm{x}) \Phi(x/\norm{x})$,
then only $\Phi$, the angular part of $k$, needs to be constrained.
This is because for the radial part, $R(x) = R(rx)$.

We first show a simple expression for $k_{jk}$ that is not the most general,
and then find the general solution.
Let $Y^\ell(x) \in \C^{2\ell + 1}$ denote the spherical harmonics of degree $\ell$ evaluated at $x \in S^2$, and let $\Phi$ be the outer product $\Phi(x) = \overline{Y^{\ell_o}(x)}Y^{\ell_i}(x)^\top$.
Using the spherical harmonics rotation formula $Y^\ell(rx) = \overline{\rho^{\ell}(r)} Y^{\ell}(x)$,
we have
\begin{align*}
  \rho^{\ell_{o}}(r) \Phi(x) \rho^{\ell_{i}}(r)^{-1}
  &= \rho^{\ell_{o}}(r) \overline{Y^{\ell_o}(x)} Y^{\ell_i}(x)^\top \rho^{\ell_{i}}(r)^{-1} \\
  &= [\rho^{\ell_{o}}(r) \overline{Y^{\ell_o}(x)}] [\overline{\rho^{\ell_{i}}(r)}Y^{\ell_i}(x)]^\top \\
  &= \overline{Y^{\ell_o}(rx)} Y^{\ell_i}(rx)^\top \\
  &= \Phi(rx).%
\end{align*}
Therefore, $k_{jk}(x) = R(\norm{x})\overline{Y^{\ell_o}(x/\norm{x})}Y^{\ell_i}(x/\norm{x})^\top$ also satisfies \cref{eq:hjk}.
Since there are no constraints on $R$, it can be parameterized and learned.
However, this is not the most general expression.
A reparametrization allows decomposing the angular constraint and
setting different radial functions to each component, increasing the expressivity of the filter.
To find the general expression for $k_{jk}$, we vectorize it
(make it a vector by concatenating the columns),
and use the relation
\[ AXB = C \implies (B^\top \otimes A)\vectorize(X) = \vectorize(C). \]
Applying this to \cref{eq:hjk}, we get
\begin{align*}
  \vectorize(k_{jk}(rx))
  &= (\rho^{\ell_{i}}(r) \otimes \rho^{\ell_{o}}(r)) \vectorize(k_{jk}(x)).
\end{align*}
Recall that the tensor product of irreps is a representation that decomposes
in irreps according to the Clebsch-Gordan transform (\cref{sec:representations}),
\begin{align*}
  \vectorize(k_{jk}(rx))
  &= C_{\ell_i,\ell_o}^\top
    \left(\bigoplus_{\ell=\abs{\ell_i-\ell_o}}^{\ell_i+\ell_o}\rho_\ell(r)\right)
    C_{\ell_i,\ell_o} \vectorize(k_{jk}(x)),
\end{align*}
where $C_{\ell_i,\ell_o}$ is an orthogonal matrix
that block-diagonalizes the tensor product of irreps.
It is composed by the Clebsch-Gordan coefficients.

Now define $k'_{jk} = C_{\ell_i,\ell_o} \vectorize(k_{jk}(x))$; we have
\begin{align*}
  k'_{jk}(rx)
  &= \left(\bigoplus_{\ell=\abs{\ell_i-\ell_o}}^{\ell_i+\ell_o}\rho_\ell(r)\right)
    k'_{jk}(x),
\end{align*}
where we can see that each $k'_{jk}$ decomposes in $2\min(\ell_i,\ell_o)+1$ parts,
and for each part
\begin{align*}
  k_{jk}^{\prime\ell}(rx)&= \rho_\ell(r) k_{jk}^{\prime\ell}(x).
\end{align*}
Using the spherical harmonics rotation formula from \cref{eq:sphharmrot},
we find that $k_{jk}^{\prime\ell}(x) = \overline{Y^\ell(x/\norm{x})}$ satisfies the constraint.
Again, since the radial component is unconstrained, we can make
$k_{jk}^{\prime\ell}(x) = R_{jk}^\ell(\norm{x})\overline{Y^\ell(x/\norm{x})}$, parametrize and learn each $R_{jk}^\ell$.
This gives $2\min(\ell_i,\ell_o)+1$ radial functions to be learned per block $k_{jk}$,
instead of only one as in our first tentative.
Finally, we obtain the filter blocks $k_{jk}$ by returning to the original basis,
\begin{align*}
  \vectorize(k_{jk}(x)) = C_{\ell_i,\ell_o}^\top \left(\bigoplus_{\ell=\abs{\ell_i-\ell_o}}^{\ell_i+\ell_o}
  R_{jk}^\ell(\norm{x})\overline{Y^\ell(x/\norm{x})}\right),
\end{align*}
and unvectorizing it recovers the $(2\ell_o+1)\times(2\ell_i+1)$ matrix.

Conventional pointwise nonlinearities are only equivariant on scalar fields,
so different ones are required for the vector components of the feature space.
\textcite{weiler20183d} found the best performance with a gated nonlinearity that predicts an extra
scalar field for each feature component, applies a pointwise sigmoid
and multiplies the feature component.
The operation is equivariant since scalar multiplication preserves equivariance.
\section{The general theory of \textcite{kondor18general}}
\label{sec:kt}
We have seen that group convolutions are a way
to learn equivariant representations in neural networks.
\textcite{kondor18general} show that it is the \emph{only} way.
By considering a general neural network with features on homogeneous spaces,
they prove that linear maps between two layers are equivariant if and only if
they have a group convolutional structure.

Following \textcite{kondor18general},
we restrict the derivation to compact discrete groups,
noting that it extends to continuous compact groups
by replacing summations with Haar integrals.
\textcite{kondor18general} define the group convolution between
\fun{f}{G}{\C} and \fun{k}{G}{\C} as%
\footnote{This corresponds to $k * f$ according to \cref{eq:gconv},
  where we follow the convention in \textcite{folland2016course}.}
\begin{align}
(f * k)(g) = \sum_{u \in G} f(gu^{-1})k(u) \label{eq:dgconv}
\end{align}
Since we are interested in functions on homogeneous spaces $G/H$ of a group $G$,
we define the projection $\downarrow$ and lifting $\uparrow$ operators
for \fun{f_1}{G}{\C} and \fun{f_2}{G/H}{\C} as follows,%
\begin{align}
  (\da f_1)(gH) &= \frac{1}{|H|} \sum\limits_{u \in gH} f_1(u),\\
  (\ua f_2)(g) &= f_2(gH),
\end{align}
There is a bijection between a set of cosets and a corresponding homogeneous space as shown in \cref{sec:group}.
We slightly abuse the notation and refer to the homogeneous space as $G/H$,
and its elements as $gH$ for $g\in G$.
We also use $gH$ as a set such that the expression $u \in gH$ make sense.
The map $g \mapsto gH$ is a well defined projection from the group to the homogeneous space.
The map $gH \mapsto g$ consists in an arbitrary choice of coset representative.

We define a more general form of group convolution where the inputs can be on homogeneous spaces,
\fun{f}{G/H_1}{\C} and \fun{k}{G/H_2}{\C},
\begin{align}
  (f * k)(g) = \sum_{u \in G} \ua f(gu^{-1}) \ua k(u).
  \label{eq:ggconv}
\end{align}
If $H_1=H_2=\{e\}$ this is equivalent to \cref{eq:dgconv}.
Since $f$ and $k$ could be on $G$ or $G/H$, there are four possible combinations.

\paragraph{Case I: \fun{f}{G}{\C} and \fun{k}{G/H}{\C}}
In this case, the convolution is
$(f * k)(g) = \sum_{u \in G} f(gu^{-1}) \ua k(u)$.
Let us compute $(f * k)(gh)$ for $h \in H$.
\begin{align*}
  (f * k)(gh) &= \sum_{u \in G} f(ghu^{-1}) k(uH) \\
              &=  \sum_{v \in G} f(gv^{-1}) k(vhH) && (v = uh^{-1}) \\
              &=  \sum_{v \in G} f(gv^{-1}) \ua k(v). && (vhH = vH) \\
              &= (f * k)(g).
\end{align*}
Therefore $f * k$ is constant on cosets $gH \in G/H$
and we can define the convolution as a function on $G/H$,
\begin{align}
  (f * k)(gH) = \sum_{u \in G} f(gu^{-1}) k(uH).
\end{align}
\paragraph{Case II: \fun{f}{G/H}{\C} and \fun{k}{G}{\C}}
In this case, the convolution is
$(f * k)(g) = \sum_{u \in G} \ua f(gu^{-1}) k(u)$.
Consider the space $H\backslash G$; any $u \in G$ can be decomposed as $u = hg$ where
$h \in H$ and $Hg \in H\backslash G$,
\begin{align*}
  (f * k)(g) &= \sum_{u \in G} \ua f(gu^{-1}) k(u) \\
             &=\sum_{Hv \in H\backslash G}\sum_{h \in H} \ua f(g(hv)^{-1}) k(hv) \\
             &= \sum_{Hv \in H\backslash G}\sum_{h \in H} f(g(v^{-1}h^{-1}H)) k(hv) \\
             &= \sum_{Hv \in H\backslash G}f(gv^{-1}H) \sum_{h \in H}  k(hv).
\end{align*}
We can define, without loss of generality, $\fun{\tilde{k}}{H \backslash G}{\C}$,
where $\tilde{k}(Hv) = \sum_{h \in H}  k(hv)$,
and the convolution reduces to
\begin{align}
  (f * k)(g) &= \sum_{Hv \in H\backslash G}f(gv^{-1}H) \tilde{k}(Hv).
\end{align}
This is analogous to the spherical cross-correlation in \cref{eq:sphcorreq},
where $f,\, k$ are on the sphere $S^2 \cong \SO(2)\backslash \SO(3)$ and
$f \star k$ is on $\SO(3)$.

\paragraph{Case III: \fun{f}{G/H_1}{\C} and \fun{k}{G/H_2}{\C}}
In this case, the convolution is
$(f * k)(g) = \sum_{u \in G} \ua f(gu^{-1}) \ua k(u)$.
Using the same procedure of case I we can show that $(f * k)(g) = (f * k)(gh)$
for all $h \in H_2$ so we can treat $f * k$ as a function on $G/H_2$.
Now, using the same procedure of case II,
\begin{align*}
  (f * k)(g) &= \sum_{u \in G} \ua f(gu^{-1}) \ua k(u) \\
                 &= \sum_{H_1v \in H_1\backslash G}f(gv^{-1}H_1) \sum_{h \in H_1}  k(hvH_2). && (u=hv)
\end{align*}
We define $\fun{\tilde{k}}{H_1 \backslash G / H_2}{\C}$
as $\tilde{k}(H_1gH_2) = \sum_{h \in H_1}  k(hgH_2)$,
and the convolution reduces to
\begin{align}
  (f * k)(gH_2) &= \sum_{H_1v \in H_1\backslash G}f(gv^{-1}H_1) \tilde{k}(H_1vH_2).
\end{align}
This is analogous to the spherical convolution of \cref{eq:sphconveq}
where $f$ and $f * k$ are on the sphere $S^2$;
$k$ is also on the sphere, but since it is a zonal function,
it can be seen as on $\SO(2)\backslash \SO(3) / \SO(2)$.

\paragraph{Case IV: \fun{f}{G}{\C} and \fun{k}{G}{\C}}
This case corresponds to \cref{eq:dgconv}
and is analogous to the continuous group convolution of \cref{eq:gconv}.

We now prove that the generalized convolution in \cref{eq:ggconv}
is indeed the most general class equivariant operations.
The following proof is different from the one by \textcite{kondor18general}.
We start from the Fourier transform using matrix coefficients,
impose the equivariance condition
and arrive at the filter constraint.
The main idea of constraining the linear map in the spectral domain is the same.
\begin{proposition}
  A linear map between \fun{f_i}{G/H_i}{\C} and \fun{f_o}{G/H_o}{\C} is
  equivariant to the action of $G$ if and only if it can be written as
  a generalized convolution (\cref{eq:ggconv}) with some filter
  \fun{k}{H_i\backslash G/H_o}{\C}, i.e., $f_o = f_i * k$.
\end{proposition}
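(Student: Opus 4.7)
The plan is to pass to the spectral (Fourier) domain and use Schur's lemma, exploiting the Peter–Weyl description of $L^2(G/H)$ developed in \cref{sec:fourierh}. The ``if'' direction is a routine change of variables: assuming $f_o = f_i * k$ as defined in \cref{eq:ggconv}, I would substitute $u \mapsto s^{-1}u$ inside the sum to show $(\lambda_s f_i)*k = \lambda_s(f_i * k)$, exactly mirroring the continuous computation of equivariance presented just before \cref{thm:conv}. So the real content is the converse.

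For the ``only if'' direction, I would first recall from \cref{sec:fourierh} that a function $f_i \in L^2(G/H_i)$ has Fourier coefficients $\hat{f_i}(\rho) \in \C^{d_\rho \times d_\rho}$ supported only in the first $n_\rho^i$ columns (those corresponding to copies of the trivial representation of $H_i$ in $\rho\!\mid_{H_i}$), and analogously $\hat{f_o}(\rho)$ lives in the first $n_\rho^o$ columns. Next I would compute how the left-translation action behaves in the spectral domain: by a direct substitution one gets $\widehat{\lambda_u f}(\rho) = \hat{f}(\rho)\rho(u)^{*}$, i.e., translation becomes right-multiplication on each Fourier block. Let $\Phi$ be the linear map of interest; writing everything blockwise across $[\rho]\in\hat{G}$, the equivariance condition $\widehat{\Phi(\lambda_u f_i)}(\rho) = \widehat{\lambda_u \Phi(f_i)}(\rho)$ translates into the requirement that $\Phi$ intertwines right-multiplication by $\rho(u)^{*}$ on each $\rho$-isotypic component, for all $u \in G$.

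Now I would apply Schur's lemma (\cref{thm:schur}) to each isotypic block. Since $\varepsilon_\rho$ splits into $d_\rho$ equivalent copies of $\rho$ (the columns of the Wigner matrix), an equivariant endomorphism of the multiplicity space is arbitrary but must act by a fixed $d_\rho \times d_\rho$ matrix $M(\rho)$ via left-multiplication: $\widehat{\Phi(f_i)}(\rho) = M(\rho)\,\hat{f_i}(\rho)$. Since $\hat{f_i}(\rho)$ is supported in its first $n_\rho^i$ columns and $\widehat{\Phi(f_i)}(\rho)$ is supported in its first $n_\rho^o$ rows, only the top-left $n_\rho^o \times n_\rho^i$ submatrix of $M(\rho)$ ever matters; the rest can be taken to be zero without loss of generality. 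Finally I would recognize precisely these truncated matrices as the Fourier coefficients of a function $k$ on the double coset space $H_i\backslash G/H_o$, again using \cref{sec:fourierh}. Inverting the Fourier transform and applying the convolution theorem (\cref{thm:conv}, adapted to the homogeneous-space cases carried out explicitly in Cases I--IV just above the proposition) exhibits $\Phi(f_i)$ as the generalized convolution $f_i * k$.

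The part I expect to be most delicate is the bookkeeping in the Schur step: one must carefully choose the Peter–Weyl basis so that the trivial $H_i$- and $H_o$-isotypic components sit in the first $n_\rho^i$ and $n_\rho^o$ coordinates of each irrep block, then verify that the resulting ``truncated'' matrix $M(\rho)$ satisfies the double-invariance that characterizes Fourier coefficients of functions on $H_i\backslash G/H_o$. Everything else is essentially forced once the spectral picture is set up. Extending from discrete compact groups to general compact ones just amounts to replacing sums by Haar integrals, as noted by \textcite{kondor18general}.
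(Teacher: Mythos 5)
Your proposal is correct and follows essentially the same route as the paper: both pass to the spectral domain via the Peter--Weyl basis of \cref{sec:fourierh}, show that equivariance forces the map to act by left-multiplication by a fixed matrix on each Fourier block, and identify the surviving $n_\rho^o \times n_\rho^i$ submatrix as the Fourier coefficient of a filter on $H_i\backslash G/H_o$. The only difference is presentational: where you invoke Schur's lemma on the isotypic components, the paper derives the same conclusion by an explicit linear-independence computation on the coefficients $\alpha_{ij}^{kl}$, which amounts to proving that instance of Schur's lemma by hand.
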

\begin{proof}
  The forward direction is proved by a simple change of variables in \cref{eq:ggconv}.
  For the backward direction, consider the linear equivariant map $\varphi$ such that
  $f_o = \varphi(f_i)$.
  Recall that functions on homogeneous spaces can be expanded in Fourier series using the
  same basis as their groups (\cref{sec:fourierh}),
  \begin{align}
    \label{eq:fthspace}
    f_i(gH_i) &= \sum_{[\rho] \in \hat{G}} d_\rho \sum_{i=1}^{d_\rho}\sum_{j=1}^{n_{\rho}^i} \hat{f}_i(\rho)_{ji} \rho_{ij}(g),
  \end{align}
  and since $\varphi$ is linear,
  \begin{align}
    f_o(gH_o) = (\varphi f_i)(gH_o) =
    \sum_{[\rho] \in \hat{G}}d_\rho \sum_{i=1}^{d_\rho}\sum_{j=1}^{n_{\rho}^i} \hat{f}_i(\rho)_{ji}(\varphi\rho_{ij})(g).
    \label{eq:varphiexpansion}
  \end{align}
  Recall that the sum is from 1 to $n_\rho^i$ because we reorder the columns of $\rho$
  so that the $n_\rho^i$ columns that form the basis of $L^2(G/H)$
  appear first (see \cref{sec:fourierh}).
  In this proof, we want to use the same ordering for $G/H_i$ and $G/H_o$ so we
  take the notation $\sum_{j=1}^{n_{\rho}^i}$ to mean
  ``sum over $j$ such that the $j$-th row/column of $\rho$ appear on the basis of $L^2(G/H_i)$''.

  We seek conditions under which $\varphi$ is equivariant.
  The next result will be needed, which follows directly from
  $\rho(g_1g_2) = \rho(g_1)\rho(g_2)$,
  \begin{align*}
  (\lambda_u\rho_{ij})(g)
  &= \rho_{ij}(u^{-1}g) = \sum_{m=1}^{d_\rho} \rho_{im}(u^{-1})\rho_{mj}(g).
  \end{align*}
  Since $\{\rho_{ij}\}$ form a basis of
  the irreducible subspace of $G/H_i$ associated with $[\rho]$,
  we can represent $\varphi$ by its effect on the basis elements,
  \begin{align}
    \label{eq:varphionbasis}
    (\varphi \rho_{ij})(g)
    &= \sum_{k=1}^{d_\rho}\sum_{l=1}^{n_{\rho}^o} \alpha_{ij}^{kl} \rho_{kl}(g),
  \end{align}
  where $n_{\rho}^o$ may be different than $n_{\rho}^i$
  since $G/H_o$ may be different from $G/H_i$.
  Applying the equivariance condition,
  \begin{align*}
    (\lambda_u(\varphi \rho_{ij}))(g)
    &= ((\varphi (\lambda_u\rho_{ij})))(g), \\
    \sum_{k=1}^{d_\rho}\sum_{l=1}^{n_{\rho}^o} \alpha_{ij}^{kl} \rho_{kl}(u^{-1}g)
    &=  \sum_{m=1}^{d_\rho} \rho_{im}(u^{-1})(\varphi\rho_{mj})(g), \\
    \sum_{k=1}^{d_\rho}\sum_{l=1}^{n_{\rho}^o} \alpha_{ij}^{kl}    \sum_{m=1}^{d_\rho} \rho_{km}(u^{-1})\rho_{ml}(g)
    &=  \sum_{m=1}^{d_\rho} \rho_{im}(u^{-1})  \sum_{k=1}^{d_\rho}\sum_{l=1}^{n_{\rho}^o} \alpha_{mj}^{kl} \rho_{kl}(g), \\
    \sum_{k=1}^{d_\rho}\sum_{l=1}^{n_{\rho}^o}\sum_{m=1}^{d_\rho}  \alpha_{ij}^{kl}\rho_{km}(u^{-1})\rho_{ml}(g)
    &=  \sum_{k=1}^{d_\rho}\sum_{l=1}^{n_{\rho}^o}\sum_{m=1}^{d_\rho} \rho_{im}(u^{-1})   \alpha_{mj}^{kl} \rho_{kl}(g).
  \end{align*}
  Since the $\rho_{ij}$ are linearly independent, we fix $l$ on both sides,
  and make $m=n$ on the right and $k=n$ on the left, obtaining
  \begin{align*}
    \sum_{k=1}^{d_\rho}\alpha_{ij}^{kl}\rho_{kn}(u^{-1})
    &= \sum_{m=1}^{d_\rho} \alpha_{mj}^{nl} \rho_{im}(u^{-1}).
  \end{align*}
  Using the linear independence again, we find that
  if $k \neq i$, then $\alpha_{ij}^{kl} = 0$;
  if $m \neq n$, then $\alpha_{mj}^{nl} = 0$;
  if $k = i$ and $m = n$, then $\alpha_{ij}^{il} = \alpha_{nj}^{nl}$.
  Therefore, we only need two parameters to characterize $\alpha$
  (and $\varphi$);
  we define a matrix $A$ such that $A_{lj} = \alpha_{ij}^{il}$, and rewrite \cref{eq:varphionbasis} as
  \begin{align*}
  (\varphi \rho_{ij})(g)
    &= \sum_{l=1}^{n_{\rho}^o} A_{lj} \rho_{il}(g).
  \end{align*}
  Applying this to \cref{eq:varphiexpansion} yields
  \begin{align*}
    f_o(gH_o)
    &= \sum_{[\rho] \in \hat{G}}d_\rho \sum_{i=1}^{d_\rho}\sum_{j=1}^{n_{\rho}^i}
      \hat{f}_i(\rho)_{ji}\sum_{l=1}^{n_{\rho}^o} A_{lj} \rho_{il}(g) \\
    &= \sum_{[\rho] \in \hat{G}}d_\rho \sum_{i=1}^{d_\rho}
      \sum_{l=1}^{n_{\rho}^o} \rho_{il}(g) \sum_{j=1}^{n_{\rho}^i} A_{lj}\hat{f}_i(\rho)_{ji}.
  \end{align*}
  Only  $n_{\rho}^o$ rows and $n_{\rho}^i$ columns of $A$ are used,
  so we can write $A$ as a $d_\rho \times d_\rho$ matrix representing the Fourier
  coefficient of some \fun{k}{H_i\backslash G/H_o}{\C}
  as per discussion in \cref{sec:fourierh}.
  We then make $A = \hat{k}(\rho)$ and write
  \begin{align*}
    f_o(gH_o)
    &= \sum_{[\rho] \in \hat{G}}d_\rho \sum_{i=1}^{d_\rho}
      \sum_{l=1}^{n_{\rho}^o} (\hat{k}(\rho)\hat{f}_i(\rho))_{li} \rho_{il}(g).
  \end{align*}
  The right hand side is exactly the Fourier expansion of
  a function on $G/H_o$ with coefficients $\hat{k}(\rho)\hat{f}_i(\rho)$ (\cref{eq:fthspace}),
  wich are the Fourier coefficients of $f_i * k$ (\cref{thm:conv}).
  Therefore, $f_o = f_i * k$.
  This shows that any equivariant linear map can be written as a generalized convolution.
\end{proof}
\begin{remark}
  This proof works for complex valued functions and filters,
  but is trivially extended to vector-space valued functions,
  as long as each dimension is independently acted upon by the group.
  In other words, the group action does not mix dimensions,
  so the feature vector is a stack of \emph{scalar fields}.
  The case of feature maps as more general fields is discussed in \cref{sec:cohen}.
\end{remark}
\section{Fiber bundles}
\label{sec:fiber-bundles}
Following \textcite{cohen2019general}, we use the language of fiber bundles to present the generalization of the results in \cref{sec:kt}.
We briefly introduce the main concepts in this section;
refer to \textcite{nakahara2003geometry} and \textcite{gallierdiffgeom} for a more complete exposition.

\begin{definition}
A \emph{fiber bundle} $(E, p, B, F, G)$, denoted $\bundle{E}{p}{B}$,
consists of manifolds $E,\,B$ called the \emph{total space} and \emph{base space}, respectively, and a surjective \emph{projection} \fun{p}{E}{B}.
The inverse image $p^{-1}(x)$ is called the \emph{fiber} at $x$, which is isomorphic to a manifold $F$.
The Lie group $G$, called the \emph{structure group}, acts on $F$ from the left.
There exists a set $\{U_i\}$ that is an open cover of $B$ with associated diffeomorphisms $\fun{\phi_i}{p^{-1}(U_i)}{U_i \times F}$, called \emph{local trivializations}.
A \emph{section} \fun{s}{B}{E} is a map satisfying $p(s(x)) = x$ that maps points in the base space to a representative element of the fiber at that point.
\end{definition}
\begin{remark}
  We say that fiber bundles are locally trivial, as $E$ locally looks like the product $B \times F$.
  If $E = B \times F$ everywhere, we call it a \emph{trivial bundle}.
  Trivial bundles have sections defined globally; in general that is not true and
  we work with local sections \fun{s}{B \supseteq U}{E}.
\end{remark}
\begin{example}
  We can see a cylinder as the cartesian product of a circle $S^1$ and line segment $[0, 1]$,
  which is a trivial fiber bundle over $S^1$ with fiber $[0, 1]$.
  A Möbius strip is also a fiber bundle over $S^1$ with fiber $[0, 1]$,
  however it is nontrivial because the fibers are twisted.
  The Möbius strip is only locally homeomorphic to $S^1 \times [0, 1]$.
\end{example}

In a \emph{principal fiber bundle}, we have the additional properties:
(i) the structure group $G$ acts from the right on the total space E,
(ii) the fiber $F$ is homeomorphic to $G$, and
(iii) $E/G$ is diffeomorphic to the base space $B$.
For example, consider a group $E$ and subgroup $G$.
In general, $E/G \times G$ is not isomorphic to $E$,
but we can consider the principal bundle $\bundle{E}{p}{E/G}$,
where $E/G \supseteq U_i  \times G$ locally looks like $E$
(is diffeomorphic to $p^{-1}(U_i)$).

A \emph{vector bundle} is a fiber bundle where fibers are vector spaces.
An example is the tangent bundle $TM$ of an $n$-dimensional manifold $M$,
where the fiber at $p \in M$ is the tangent space $T_pM \cong \R^n$.
\begin{definition}
\label{def:assoc}
An \emph{associated vector bundle} can be associated to a principal bundle \bundle{E}{p}{E/G}.
We consider a vector space $V$ and define an equivalence relation
given by $G$ on $E \times V$ as
$(x, y) \sim_G (xg, \rho(g)^{-1}y)$, where $\rho$ is a representation of $G$ on $V$ and $g \in G$.
The associated vector bundle is then the bundle \bundle{(E \times V)/{\sim_G}}{p_F}{B},
where $p_F([x, y]) = p(x)$.
\end{definition}
\begin{remark}
  Let the columns of a matrix $B$ form a basis for the vector space $\R^n$,
  and let $x$  be a vector represented in this basis.
  Recall how $B$ and $x$ transform under a change of basis $A$:
  $(B, x) \mapsto (BA, A^{-1}x)$.
  The equivalence relation that defines the associated vector bundle follows the same idea:
  $(x, y) \mapsto (xg, \rho(g)^{-1}y)$.
\end{remark}
\begin{definition}[twist]
  \label{def:twist}
  Let \bundle{G}{p}{G/H} be a principal bundle with some section $s$.
  By definition, $H$ acts on $G$ on the right.
  Since the total space is now a group and the base space a homogeneous space,
  we can have actions of $G$ on itself and on $G/H$.
  In general, $gs(x) \neq s(gx)$ for $x \in G/H,\, g \in G $.
  It's easily shown that $gs(x)H=s(gx)H$, i.e., they are on the same fiber.
  We define the \emph{twist} $\fun{\h}{G/H \times G}{H}$ to encode the alignment
  $gs(x) = s(gx)\h(x, g)$.
  This assumes the section $s$ is the same at $x$ and $gx$, but we could also
  define a twist between different sections.
  When considering the coset $eH=H$ where $e$ is the identity of $G$,
  we assume $s(H)=e$ and define ${\h(g) = \h(H, g)}$,
  obtaining the relation $s(gH) = g\h(g)^{-1}$.
\end{definition}
Figure~\ref{fig:twist} illustrates some of the concepts introduced.
\begin{figure}[htbp]
  \centering
  \includegraphics[width=\textwidth]{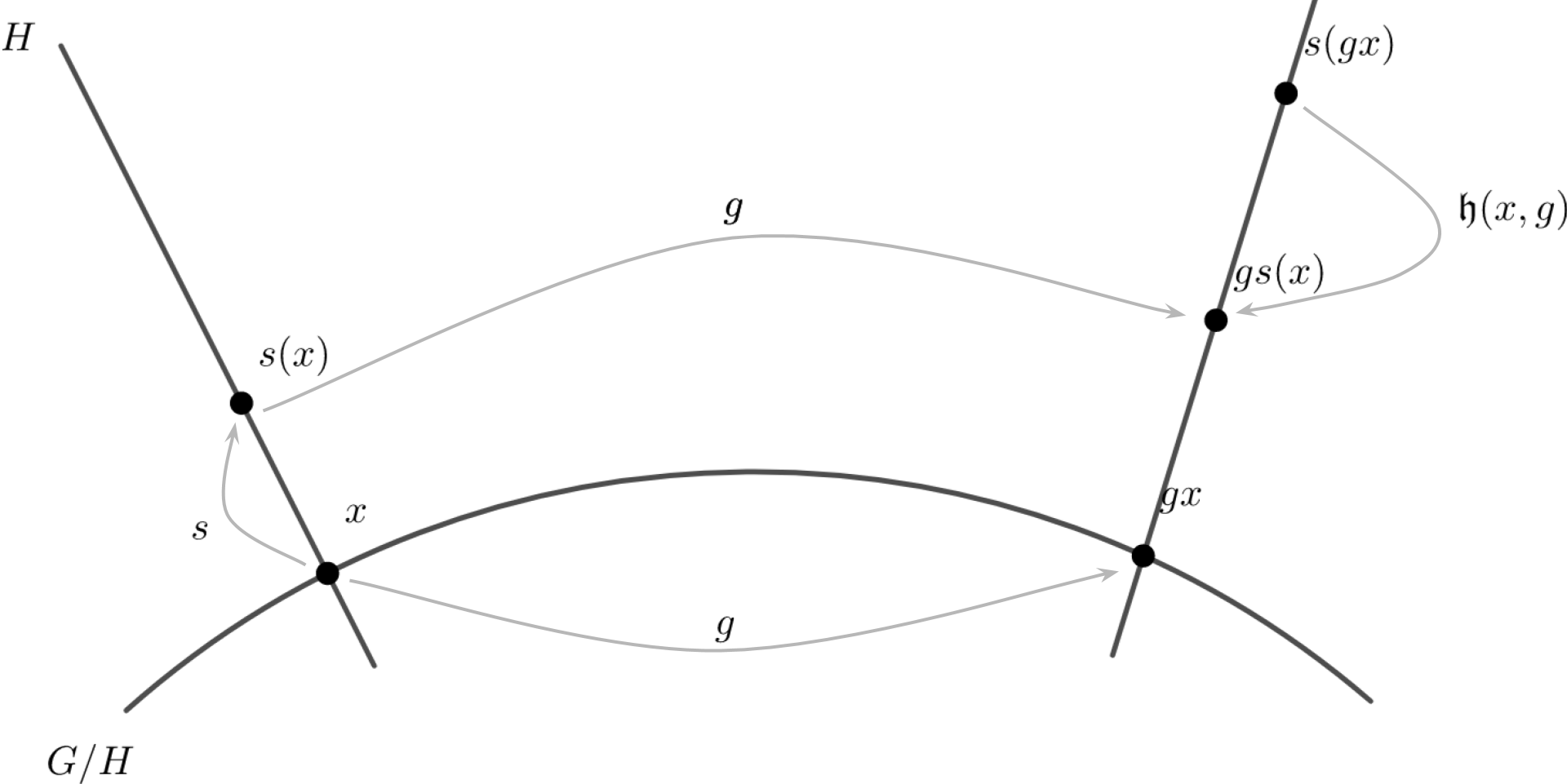}
  \caption{Principal bundle illustration.
    The point $x$ is on the base space $G/H$; an action by $g\in G$  takes it to $gx\in G/H$.
    The section $s$ takes it to $s(x)$ on the total space $G$, on the same fiber associated with $x$.
  Since $gs(x) \neq s(gx)$, we define the ``twist'' $\mathfrak{h}$ such that $gs(x) = s(gx)\h(x, g)$.}
  \label{fig:twist}
\end{figure}
\section{The general theory of \textcite{cohen2019general}}
\label{sec:cohen}
We present a result by \textcite{cohen2019general}, which
(1) generalizes the results in \cref{sec:kt} to the case where features are general fields, and
(2) generalizes \textcite{weiler20183d} from $\SE(3)$-equivariance to a larger class of groups.

It is convenient to describe feature maps of \gcnns\
as sections of associated vector bundles (\cref{def:assoc}).
When the features are vectors on some homogeneous space $G/H$ (i.e., for each $x \in G/H$ we associate a feature vector),
it makes sense to seek equivariance to the group $G$.
For example, if features live on the sphere $S^2 \cong \SO(3)/\SO(2)$
we can consider equivariance to $\SO(3)$~\cite{cohen2018spherical,esteves2018learning}.
The bundle \bundle{G}{p}{G/H} is a principal bundle, where $p(g) = gH$,
so we can construct an associated vector bundle to it
for any vector space $V$ and representation of $H$ on $V$.
There is freedom to choose $V$. For scalar fields we can choose any number of channels
and the trivial representation $\rho = I$ applies.
More generally, a direct sum of vector spaces $V_j$ of arbitrary dimensions can be used
when there are representations $\rho_j$ on each $V_j$.
In this case, we have equivariant feature fields that are sections of the bundle.
\begin{example}
A conventional \cnn\ can be seen as a trivial associated vector bundle $\R^2 \times \R^n$ where
the structural group contains only the identity,
and the trivial representation $\rho=I$ acts on the feature fields with $n$ channels.
We can see the features as a stack of $n$ independent scalar fields.
\end{example}

We have defined the features in a layer of a \gcnn.
The next step is to determine how to transform between layers.
Following most neural networks, we restrict this map to be
a linear operation with learned parameters, followed by a nonlinearity.
Crucially, the transformation must preserve equivariance,
which imposes constraints that we will obtain.

We can see features as functions $\fun{f}{G}{V}$ such that, for all $h \in H$,
\begin{align}
f(gh) = \rho(h^{-1})f(g) \label{eq:mackey}
\end{align}
where $f$ is called a Mackey function.
This works because the relation in $G \times V$
\[(g, f(g)) \mapsto (gh, f(gh)) = (gh, \rho(h^{-1})f(g))\]
holds as prescribed by \cref{def:assoc}.
Note that the space of Mackey functions is exactly the space where the
representations of $G$ induced by $H$ act (see \cref{def:induced}).
From a practical standpoint, defining $f$ this way is redundant and wasteful since
the function is on the total space $G$, but it is useful for algebraic manipulation.

Alternatively, we can consider features as a collection of local functions
\fun{f_j}{G/H \supseteq U_j}{V} on trivializing neighborhoods $U_j$.
We denote $\I_G$ the space of Mackey functions and $\I_U$ the space of local functions.
We can obtain $f_j \in \I_U$ from $f \in \I_G$ using a section of the principal bundle
\begin{align}
  f_j(x) = (\da f)(x) = f(s(x)). \label{eq:g2l}
\end{align}
For the converse, we apply the last relation in \cref{def:twist} to $f(s(gH))$ and use
\cref{eq:mackey}, $f_j(gH) = f(s(gH)) = f(g\h(g)^{-1}) = \rho(\h(g)) f(g)$.
Hence,
\begin{align}
  f(g) = (\ua f_j)(g) = \rho(\h(g))^{-1} f_j(gH). \label{eq:l2g}
\end{align}
\Cref{eq:g2l,eq:l2g} define an isomorphism between $\I_G$ and $\I_U$,
and generalize the $\da{}$ and $\ua{}$ notation of \cref{sec:kt}
(where $\rho$ was always the identity).

The equivariance manifests via the representation of $G$ induced by $H$,
${\pi = \text{Ind}_H^G\rho}$.
For $f \in \I_G$, we write
\begin{align}
(\pi_G(u) f)(g) &= f(u^{-1}g), \label{eq:eqg}
\end{align}
For $f_j \in I_U$, we combine \cref{eq:g2l} and \cref{eq:eqg},
\begin{align}
  (\pi_U(g) f_j)(x) &= (\pi_G(g) f)(s(x)) \nonumber \\
                    &= f(g^{-1}s(x)) \nonumber\\
                    &= f(s(g^{-1}x)\h(x, g^{-1})) \nonumber && (\text{\cref{def:twist}}) \\
                    &= \rho(\h(x, g^{-1})^{-1}) f(s(g^{-1}x)) \nonumber && (\text{\cref{eq:mackey}}) \\
                    &= \rho(\h(x, g^{-1})^{-1}) f_j(g^{-1}x) \label{eq:eql}
\end{align}
\begin{proposition}
  Any equivariant linear map between two spaces of feature fields on homogeneous spaces
  can be written as a cross-correlation operation.
\end{proposition}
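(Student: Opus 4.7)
The plan is to realize $\varphi$ as an integral operator with an operator-valued kernel, then let equivariance and the Mackey conditions progressively constrain the kernel until it takes the form of a cross-correlation filter. Concretely, assuming enough regularity (automatic in the finite/discrete-compact setting by a basis expansion, and in general relying on Schwartz-kernel-style representations), I would write
\[
  (\varphi f)(g) = \int_G \kappa(g,u)\, f(u)\, du,
\]
with $\kappa\colon G\times G \to \Hom(V_i, V_o)$, and then peel off the structure of $\kappa$ in two stages.

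First I would impose the equivariance relation $\pi_o(v)\,\varphi = \varphi\,\pi_i(v)$ from \cref{eq:eqg} for all $v \in G$. Substituting inside the integral and using the change of variables $u \mapsto vu$ (with left-invariance of the Haar measure) forces $\kappa(g, vu) = \kappa(v^{-1}g, u)$ for all $v,g,u$. Specializing $v=g$ collapses this to $\kappa(g,u) = \kappa(e, g^{-1}u)$, so defining $k(v) := \kappa(e,v)$ I obtain
\[
  (\varphi f)(g) = \int_G k(g^{-1}u)\, f(u)\, du,
\]
which is already the desired cross-correlation form.

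Second, I would extract the constraints $k$ must satisfy in order for $\varphi$ to be well-defined on the input Mackey space and land in the output Mackey space. The output condition $(\varphi f)(gh_o) = \rho_o(h_o^{-1})(\varphi f)(g)$, combined with the arbitrariness of $f$, yields the left-equivariance $k(h_o v) = \rho_o(h_o)\,k(v)$ for all $h_o \in H_o$. Compatibility with the input Mackey relation $f(uh_i) = \rho_i(h_i^{-1}) f(u)$, obtained by substituting $u \mapsto u h_i$ inside the integral and comparing with the unsubstituted version, yields the right-equivariance $k(v h_i) = k(v)\,\rho_i(h_i)$ for all $h_i \in H_i$. Together these give the bi-equivariance $k(h_o v h_i) = \rho_o(h_o)\,k(v)\,\rho_i(h_i)$, which is exactly the vector-bundle analogue of the filters on $H_o\backslash G/H_i$ that appeared in \cref{sec:kt}. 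The converse direction, that every such $k$ defines an equivariant linear map, is a one-line verification using the same change of variables.

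The main obstacle is the opening step: justifying that every equivariant linear $\varphi$ admits a kernel representation at all. In the finite or discrete-compact setting this is automatic from a basis expansion, but for general Lie groups one must either invoke a Schwartz kernel theorem for the appropriate function spaces or impose continuity/regularity hypotheses on $\varphi$. A cleaner alternative, closer in spirit to the argument of \cref{sec:kt}, is to work in a Peter--Weyl basis adapted to the induced representations, apply Schur-type reasoning block by block on the isotypic components (now with $\Hom(V_i,V_o)$-valued intertwiners replacing the scalar multiplicities of the Kondor--Trivedi proof), and reassemble the resulting blocks into the bi-equivariant filter $k$; both routes produce the same cross-correlation structure.
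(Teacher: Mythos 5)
Your proof follows essentially the same route as the paper: posit a two-argument operator-valued kernel $\kappa(g,u)$, impose equivariance and change variables in the Haar integral to collapse it to a one-argument filter $k(g^{-1}u)$, then use the Mackey conditions on input and output to derive the bi-equivariance constraint $k(h_o v h_i) = \rho_o(h_o)\,k(v)\,\rho_i(h_i)$. Your explicit remark that the initial kernel representation of $\varphi$ requires justification is a fair point that the paper also glosses over, but it does not change the substance of the argument.
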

\begin{proof}
  We treat the input/output feature spaces as spaces of sections of associated vector bundles to
  the principal bundles \bundle{G}{}{G/H_i} and \bundle{G}{}{G/H_o},
  with vector spaces $V_i$ and $V_o$.
  Consider some linear map \fun{k}{G \times G}{\Hom(V_i, V_o)} applied to
  features $\I_{Gi} \ni \fun{f}{G}{V_i}$,
  \begin{equation}
    (k f) (g) = \int\limits_{u \in G} k(g, u)f(u)\, du. \label{eq:linmap}
  \end{equation}
  Equivariance demands ${\pi_o(u)(k f) = k (\pi_i(u)f)}$.
  Applying this to \cref{eq:linmap}, using \cref{eq:eqg} and changing variables,
  we obtain a constraint on $k$,
  \begin{align*}
    \pi_o(u)(k f)
    &= k (\pi_i(u)f), \\
    \int\limits_{v \in G} k(u^{-1}g, v)f(v)\, dv
    &= \int\limits_{v \in G} k(g, v)f(u^{-1}v)\, dv, \\
    \int\limits_{v \in G} k(u^{-1}g, v)f(v)\, dv
    &= \int\limits_{v \in G} k(g, uv)f(v)\, dv, && (v \mapsto u^{-1}v)\\
    k(u^{-1}g, v)f(v)
    &= k(g, uv), \\
    k(g, v)
    &=k(ug, uv). && (g \mapsto ug)
  \end{align*}
  Now define ${k(u^{-1}v) = k(e, u^{-1}v) = k(u, v)}$.
  Replacing $k(g^{-1}u)$ in \cref{eq:linmap}, we obtain a cross-correlation,
  \begin{equation}
    (k \star f) (g) =  \int\limits_{u \in G} k(g^{-1}u)f(u)\, du. \label{eq:xcorr}
  \end{equation}%
\end{proof}
\noindent This proof assumes $f \in \I_{Gi}$, we still need an expression for $f \in \I_{Ui}$.

Since $k \star f$ must be a Mackey function and satisfy \cref{eq:mackey},
we immediately obtain a left-equivariance condition ${k(hg) = \rho_o(h)k(g)}$
for $g \in G,\, h \in H$.
Since $f$ also satisfies \cref{eq:mackey}, we have
\begin{align*}
  \int\limits_{u \in G} k(g^{-1}u)f(u)\, du
  &= \int\limits_{v \in G} k(g^{-1}vh)f(vh)\, dv && (v = uh^{-1}) \\
  &= \int\limits_{u \in G} k(g^{-1}uh)\rho_i(h^{-1})f(u)\, du,
\end{align*}
which yields a right-equivariance condition $k(gh) = k(g)\rho_i(h)$.
We thus characterize the space of equivariant kernels as
\begin{align}
  K_G = \{\fun{k}{G}{\Hom(V_i, V_o)} \mid&\,k(h_ogh_i) = \rho_o(h_o)k(g)\rho_i(h_i), \nonumber \\
  &\forall g \in G,\, h_i\in H_i,\, h_o\in H_o\}. \label{eq:kg}
\end{align}

Up to this point, we have shown a general expression for equivariant linear maps between
functions in $\I_G$, which are on the group $G$,
and characterized the maps as cross-correlations with kernel functions on $G$.
Representing features and kernels this way is redundant, so we will now find expressions for
 kernels and cross-correlations on the homogeneous space $G/H_i$.

 First, we show that any $k_G \in K_G$ can be represented as a function on $G/H_i$.
We define \fun{k_{H}}{G/H_i}{\Hom(V_i, V_o)} as
\begin{align}
k_H(gH_i) =k_G(s(gH_i)). \label{eq:kappagh}
\end{align}
Applying the last relation in \cref{def:twist} to $k_G(g)$,
\begin{align}
  k_G(g) &= k_G(s(gH_i)\h_i(g)) \nonumber \\
            &= k_G(s(gH_i))\rho_i(\h_i(g))  \nonumber \\
            &= k_H(gH_i)\rho_i(\h_i(g)). \label{eq:kappag}
\end{align}
\Cref{eq:kappagh,eq:kappag} define an isomorphism between spaces of kernels.
Now we show that $k_H$ is left-equivariant, which characterizes the space of kernels on $G/H_i$,
\begin{align*}
  k_H(hx) &= k_G(s(hx)) \\
                     &= k_G(hs(x)\h_i(x, h)^{-1}) && (\text{\cref{def:twist}})\\
                     &= \rho_o(h) k_G(s(x))\rho_i(\h_i(x, h)^{-1}) \\
                     &= \rho_o(h) k_H(x)\rho_i(\h_i(x, h)^{-1}).
\end{align*}
Finally, we find an expression for the cross-correlation with inputs and kernels on the homogeneous space $G/H_i$ and outputs on $G/H_o$.
The following relation will be necessary,
\begin{align}
  \h(g_1g_2) &= s(g_1g_2H)^{-1}g_1g_2 \nonumber \\
            &= (g_1s(g_2H)\h(g_2H, g_1)^{-1})^{-1}g_1g_2 \nonumber \\
            &= \h(g_2H, g_1)s(g_2H)^{-1}g_2 \nonumber \\
            &= \h(g_2H, g_1)\h(g_2). \label{eq:hg1g2}
\end{align}
The strategy is to apply the feature space and kernel isomorphisms in \cref{eq:kappag,eq:l2g,eq:g2l} to the cross-correlation expression in \cref{eq:xcorr},
\begin{align*}
  (k_G \star f) (g) &=  \int\limits_{u \in G} k_G(g^{-1}u)f(u)\, du\\
                       &=  \int\limits_{u \in G} k_H(g^{-1}uH_i)\rho_i(\h_i(g^{-1}u))
                         \rho_i(\h_i(u))^{-1} f_j(uH_i)\, du\\
                       &=  \int\limits_{u \in G} k_H(g^{-1}uH_i)\rho_i(\h_i(uH_i, g^{-1}))f_j(uH_i)\, du. && (\text{\cref{eq:hg1g2}}) \\
\end{align*}
Since $uH_i \in G/H_i$, we can replace the integration limits.
The previous expression still returns a function on $G$, so we apply \cref{eq:g2l} as follows,
\begin{align}
  (k_H \star f_j) (y)  &= (k \star f) (s_o(y)) \nonumber \\
  &= \int\limits_{x \in G/H_i} k_H(s_o(y)^{-1}x)\rho_i(\h_i(x, s_o(y)^{-1}))f_j(x)\, dx,
\end{align}
obtaining the general expression of a $G$-equivariant linear map between feature spaces on homogeneous spaces $G/H_i$ and $G/H_o$, where $G$ is any unimodular locally compact group,
the input features and kernel are defined on $G/H_i$,
and the features take value on any vector space where
there is a representation $\rho_i$ of $H_i$.

One way to further generalize this result is to remove the assumption that features
live on homogeneous spaces of the group so that the group ceases to act transitively on the feature domain.
Features are still vector-valued and group representations still act on them.
This enables the design of equivariant networks for arbitrary graphs and meshes, for example.
The Gauge Equivariant CNNs introduced in \textcite{CohenWKW19} follow this path.

\section*{Acknowledgments}
I am indebted to Jean Gallier who was incredibly helpful through his lectures,
his books (with Jocelyn Quaintance), discussions, and observations about this manuscript.
I'd like to thank Taco Cohen for the thorough and insightful comments and corrections,
Maurice Weiler for corrections, and Shubhendu Trivedi, Edgar Dobriban, Pratik Chaudhari and Kostas Daniilidis for encouraging comments.

\renewcommand{\refname}{\spacedlowsmallcaps{References}} %
\printbibliography

\end{document}